\documentclass{article} %
\usepackage{iclr2026_conference,times}
\usepackage{xcolor}         %
\usepackage{xparse}
\usepackage{graphicx}
\usepackage{subcaption}
\usepackage{amsmath}
\usepackage{amsthm}
\usepackage{amssymb}
\usepackage{bm}
\usepackage{booktabs}
\usepackage{multirow}
\usepackage{wrapfig}
\usepackage{makecell}

\newtheorem{definition}{Definition}
\newtheorem{theorem}{Theorem}

\newtheorem{lemma}{Lemma}

\newtheorem{remark}{Remark}
\newtheorem*{thm*}{Theorem}
\newtheorem*{lem*}{Lemma}

\usepackage{algpseudocode}
\usepackage[linesnumbered,ruled,vlined]{algorithm2e}
\SetKwComment{tcp}{$\triangleright$\ }{}
\DontPrintSemicolon
\SetNoFillComment

\setlength{\algomargin}{1.5em}
\usepackage{mathtools}
\usepackage{subfiles} 
\usepackage{tcolorbox}

\usepackage[utf8]{inputenc} %
\usepackage[T1]{fontenc}    %
\usepackage{hyperref}       %
\usepackage{url}            %
\usepackage{booktabs}       %
\usepackage{amsfonts}       %
\usepackage{nicefrac}       %
\usepackage{microtype}      %
\usepackage{cleveref}       %
\usepackage{lipsum}         %
\usepackage{graphicx}
\usepackage{natbib}
\usepackage{doi}
\usepackage{enumitem}
\usepackage{bm}

\newcommand{\dprag}{\textsc{DP-RAG}}
\newcommand{\topk}{\textsc{Top-K}}
\newcommand{\naivedprag}{\textsc{Naive-Multi-RAG}}
\newcommand{\sampledprag}{\textsc{Subsampling-Multi-RAG}}
\newcommand{\dpfixtau}{\textsc{MuRAG}}
\newcommand{\dpadaptovetau}{\textsc{MuRAG-Ada}}
\newcommand{\algocount}{\textsc{Count}}
\newcommand{\expomech}{\textsc{ExpoMech}}
\newcommand{\poissionsampling}{\textsc{PoissonSampling}}

\newcommand{\llm}{\mathrm{LLM}}

\SetKwInput{KwSet}{Set}
\SetKwInput{KwConstruct}{Construct}
\SetKwInput{KwRequire}{Require}
\SetKwInOut{KwOut}{output}

\usepackage{amsmath,amsfonts,bm}

\def\eqref#1{equation~\ref{#1}}

\def\1{\bm{1}}

\DeclareMathAlphabet{\mathsfit}{\encodingdefault}{\sfdefault}{m}{sl}
\SetMathAlphabet{\mathsfit}{bold}{\encodingdefault}{\sfdefault}{bx}{n}

\title{Private-RAG: Answering Multiple Queries with LLMs while Keeping Your Data Private}

\author{
\hspace{-0.5em} Ruihan Wu\thanks{Equal contribution.} \\
Computer Science and Engineering \\
University of California, San Diego \\
\texttt{ruw076@ucsd.edu}
\And
Erchi Wang\footnotemark[1] \\
Halıcıoğlu Data Science Institute \\
University of California, San Diego \\
\texttt{erw011@ucsd.edu}
\AND
Zhiyuan Zhang \\
Department of Computer Science \\
University of California, Los Angeles \\
\texttt{hollyzhang03@ucla.edu}
\And
\qquad \qquad \qquad \hspace{1.5em} Yu-Xiang Wang \\
\qquad \qquad \qquad \hspace{1.5em} Halıcıoğlu Data Science Institute \\
\qquad \qquad \qquad \hspace{1.5em} University of California, San Diego \\
\qquad \qquad \qquad \hspace{1.5em} \texttt{yuxiangw@ucsd.edu}
}

\iclrfinalcopy %
\begin{document}

\maketitle

\begin{abstract}

Retrieval-augmented generation (RAG) enhances large language models (LLMs) by retrieving documents from an external corpus at inference time. When this corpus contains sensitive information, however, unprotected RAG systems are at risk of leaking private information. Prior work has introduced differential privacy (DP) guarantees for RAG, but only in single-query settings, which fall short of realistic usage. In this paper, we study the more practical multi-query setting and propose two DP-RAG algorithms. The first, \dpfixtau, leverages an individual privacy filter so that the accumulated privacy loss only depends on how frequently each document is retrieved rather than the total number of queries. The second, \dpadaptovetau, further improves utility by privately releasing query-specific thresholds, enabling more precise selection of relevant documents. Our experiments across multiple LLMs and datasets demonstrate that the proposed methods scale to hundreds of queries within a practical DP budget ($\varepsilon\approx10$), while preserving meaningful utility.

\end{abstract}

\section{Introduction}
Retrieval-augmented generation (RAG) has become a popular approach for deploying large language models (LLMs) in real-world applications. 
A core feature of RAG is its reliance on an external dataset as the primary knowledge source at inference time. 
For example, a medical RAG system may retrieve historical patient records to answer clinical questions more accurately.
However, such external datasets often contain sensitive or confidential information. 
In domains like healthcare or law, the retrieved content may expose private records, raising serious privacy concerns. 
Prior work has shown that RAG systems without proper safeguards are vulnerable to information leakage \citep{naseh2025riddle, liu2025mask, anderson2024my, li2025generating, zhang2025deal, zeng2024good, jiang2024rag, peng2024data}, compromising data owner privacy and user trust.

Differential privacy (DP) is a widely adopted framework for providing rigorous guarantees on individual data protection. Recent work~\citep{koga2024privacy} has proposed DPSparseVoteRAG, a RAG system that ensures the generated answer satisfies DP with respect to the external dataset, for \emph{a single user query}.
Empirical results demonstrate that this approach outperforms the baseline using a public LLM without the external dataset, while achieving an \(\varepsilon\)-DP guarantee with \(\varepsilon \approx 10\).

In realistic deployments, many queries may be issued by one or more users. 
A naïve approach that applies DPSparseVoteRAG to each query and relies on standard composition theorems quickly exhausts a reasonable privacy budget. As our experimental results (Figure~\ref{fig:main}) show, to achieve reasonable utility, this approach may require a privacy budget as large as $\varepsilon=1000$, which is generally considered too weak. This raises a key question:

\emph{Can we design a differentially private RAG algorithm that handles hundreds of queries while ensuring both meaningful privacy and utility?}

We answer this question affirmatively and summarize our contributions below.

\textbf{Circumventing Query-Composition Overhead with Per-Document Rényi Filters.}\quad We propose a novel framework for multi-query differentially private RAG. Rather than composing a sequence of single-query DP-RAG executions, where the privacy budget grows with the number of queries, we leverage \emph{individual R'enyi filters}~\citep{feldman2021individual}. These filters bound privacy loss based on how many times each document is retrieved, yielding substantial savings when queries access largely disjoint documents. To the best of our knowledge, this is the first application of privacy filters in the RAG setting. Our framework can incorporate any single-query private RAG algorithm.

\textbf{Two DP Multi-RAG Algorithms for Varying Test Query Dependencies.}\quad
We propose two differentially private RAG algorithms for the multi-query setting through threshold-based screening of relevant documents and their are tailored to the degree of relevance among test-time queries.
\dpfixtau\ (Algorithm~\ref{alg:dp_fix_tau}) uses a fixed relevance threshold across all queries and is sufficient to work well for settings where queries are independent and do not share relevant private documents.
\dpadaptovetau\ (Algorithm~\ref{alg:batch_dp_rag-v2}) allocates a small portion of the privacy budget to release a query-specific relevance threshold, enabling more efficient use of the budget when queries are related and share overlapping relevant documents.
    
\textbf{Practical Multi-Query RAG with Non-Trivial Privacy Guarantees.}\quad 
We evaluate our algorithms through extensive experiments on three LLMs (OPT-1.3B, Pythia-1.4B, and Mistral-7B). Our evaluation spans three types of datasets: standard RAG benchmarks (\textit{Natural Questions}, \textit{Trivia Questions}), a more challenging multi-hop QA dataset (MQuAKE) with correlated questions, and a privacy-sensitive application (ChatDoctor) consisting of patient–doctor QA pairs. Empirical results show that both of our methods can answer hundreds of queries within a total privacy budget of $\varepsilon \approx 10$ while maintaining reasonable utility, a trade-off no baseline method achieves. Furthermore, we demonstrate that our approaches with $\varepsilon=10$ effectively defend against a state-of-the-art multi-query membership inference attack for RAG.

\section{Differential Private Retrieval-augmented Generation}

\label{sec:prelim}
\textbf{Notation.}\quad 
Let $\mathcal{V}$ denote a finite vocabulary, and let $x \in \mathcal{V}^*$ represent a prompt of arbitrary length. A document set of arbitrary size is denoted by $D = \{z_1, z_2, \ldots\}$, where each document $z_i \in \mathcal{V}^*$. For convenience, we denote by $\mathcal{Z}$ the document space, i.e., the set of all finite-length sequences over $\mathcal{V}$.

\textbf{Differential Privacy.}\quad 
We denote the data space by $\mathcal{X}$. Two datasets $D, D' \in \mathcal{X}^*$ are said to be neighboring if they differ in at most one element. In this work, we study \emph{document-level privacy} under the add/remove neighboring relation, where the data universe is $\mathcal{V}^*$ and two datasets are neighbors if they differ by exactly one document.

\begin{definition}[Differential Privacy \citep{dmns06}]
A randomized algorithm $\mathcal{M}:\mathcal{X}^* \rightarrow \Omega$ satisfies $(\varepsilon, \delta)$-differential privacy if, for all neighboring datasets $X, X' \in \mathcal{X}^*$ and all measurable subsets $O \subseteq \Omega$, $\Pr[\mathcal{M}(X) \in O] \leq e^{\varepsilon}\Pr[\mathcal{M}(X') \in O] + \delta$.
\end{definition}

\begin{definition}[R\'enyi Differential Privacy \citep{mironov2017renyi}]
A randomized algorithm $\mathcal{M}:\mathcal{X}^* \rightarrow \Omega$ satisfies $(\alpha, \varepsilon)$-R\'enyi Differential Privacy (RDP) if, for all neighboring datasets $X, X' \in \mathcal{X}^*$, the R\'enyi divergence of order $\alpha > 1$ between $\mathcal{M}(X)$ and $\mathcal{M}(X')$ is at most $\varepsilon$, i.e. $D_\alpha(\mathcal{M}(X)\,\|\,\mathcal{M}(X')) \leq \varepsilon$.
\end{definition}

We may also consider \emph{individual-level} RDP, where the R\'enyi divergence is evaluated on neighboring datasets that differ in a particular data point $z_i$. Let $\mathcal{S}(z_i,n)$ denote the set of dataset pairs $(S,\Tilde{S})$ such that $|S|,|\Tilde{S}| < n$ and $z_i \in S \triangle \Tilde{S}$, i.e. exactly one of $S,\Tilde{S}$ contains $z_i$.

\begin{definition}[Individual R\'enyi Differential Privacy]
A randomized algorithm $\mathcal{M}:\mathcal{X}^* \rightarrow \Omega$ satisfies $(\alpha,\varepsilon)$-individual RDP at point $z_i$ if, for all $(X,X') \in \mathcal{S}(z_i,n)$, $D_\alpha(\mathcal{M}(X)\,\|\,\mathcal{M}(X')) \leq \varepsilon$
\end{definition}

A \emph{privacy filter} is a stopping rule that tracks cumulative privacy loss and halts execution once the privacy budget is exceeded, thereby ensuring that the designed privacy guarantees are never violated. For completeness, we briefly introduce individual RDP filters; for a rigorous treatment, we refer readers to \citet{feldman2021individual}.

\begin{definition}[(Individual) R\'enyi Differential Privacy Filters \citep{feldman2021individual}]
A random variable \(\mathcal{F}_{\alpha,B}:\Omega^* \rightarrow \{\mathrm{CONT},\mathrm{HALT}\}\) is a privacy filter for \((\alpha,B)\)-RDP if it halts the execution of an algorithm before its accumulated (individual) privacy loss, measured in \(\alpha\)-R\'enyi divergence, exceeds \(B\).
\end{definition}

\textbf{Problem Setting.}\quad 
We study retrieval-augmented generation (RAG) with a sensitive external document collection. A decoder-only LLM with greedy decoding is modeled as a function \(\mathrm{LLM}:\mathcal{V}^* \times \mathcal{Z} \rightarrow \mathcal{V}\). Given a user prompt $x \in \mathcal{V}^*$, the system retrieves a subset of documents $D_x = R_k(x, D)$ from a private external corpus $D \in \mathcal{Z}$, where the retrieval function $R_k:\mathcal{V}^* \times \mathcal{Z} \rightarrow \mathcal{Z}$ returns the $k$ most relevant documents. The corpus $D$ contains sensitive documents, each potentially corresponding to private user information.  

We adopt a threat model in which the adversary has no direct access to the corpus $D$ but may issue arbitrary prompts $x$ to the RAG system. The underlying LLM is assumed to be public and independent of $D$. Our objective is to design a differentially private RAG mechanism that, given a set of queries $\{q_1, \dots, q_T\}$, the sensitive corpus $D$, a public LLM, and a total privacy budget $\varepsilon$, generates high-utility responses while guaranteeing $\varepsilon$-differential privacy with respect to corpus $D$.

\section{Methodology}
\subsection{Technical Overview}\label{sec:tech_review}

\textbf{Improved Privacy Accounting via Per-Document Privacy Filters.}\quad 
In retrieval-augmented generation (RAG), each query interacts with only a small, query-specific subset of the corpus $D$. 
This sparsity implies that most documents are accessed only rarely\footnote{We provide a more detailed discussion of this sparsity in Appendix~\ref{apx:sparse}.}. 
We leverage this by introducing a per-document privacy filter that monitors cumulative privacy loss and blocks further retrieval once a document’s budget is exhausted. 
Because privacy cost is incurred only upon retrieval, this accounting scheme naturally scales with the frequency of document access rather than the total number of queries.

\textbf{Screening Relevant Documents via Relevance Thresholding.}\quad 
If RAG were applied directly to the entire corpus, every document would be touched by each query, and per-document privacy filters would provide no benefit. 
To prevent this, \dpfixtau\ employs a global relevance threshold $\tau$\footnote{Intuitively, the threshold $\tau$ can be viewed as a chosen percentile of the relevance score distribution for a given query, ensuring that only the top-ranked documents contribute to privacy cost.
}: only documents whose scores exceed $\tau$ are retrieved and incur privacy cost. 
A document is excluded from all future retrievals once its privacy budget is exhausted. 
Since $\tau$ is fixed in advance and independent of the data, introducing this threshold does not consume additional privacy budget.

\textbf{Handling Correlated Queries via Adaptive Thresholding.}\quad 
When queries are \emph{correlated}, meaning their sets of relevant documents substantially overlap, a fixed relevance threshold $\tau$ can lead to inefficiencies. 
Specifically, since the relevance score distribution may shift across queries, a uniform threshold can cause some queries to retrieve more documents than necessary, prematurely exhausting the budgets of relevant documents and limiting their availability for later queries. 
To mitigate this, we propose \dpadaptovetau, which privately selects a query-specific threshold $\tau_t$ tailored to the relevance distribution of each query. 
By combining per-document privacy accounting with the private release of cumulative statistics, $\dpadaptovetau$ restricts retrieval to the most relevant documents, thereby reducing unnecessary budget consumption and preserving utility across correlated queries.

\textbf{Single-Query DP RAG after Screening.}\quad 
After thresholding, per-document privacy filters ensure that each retrieved document incurs loss only when used and is removed once its budget is exhausted. 
The resulting set is then passed to a single-query DP-RAG algorithm to generate the response. 
As shown in Algorithms~\ref{alg:dp_fix_tau} and~\ref{alg:batch_dp_rag-v2}, our multi-query framework is modular, supporting any private single-query RAG method. 
In this work, we instantiate it with a pure-DP variant of the algorithm from \citet{koga2024privacy} (Algorithm~\ref{alg:dp-rag-v2}).

\subsection{DP-RAG with a Fixed Threshold}\label{sec:dpfixtau}
In $\dpfixtau$, we impose a fixed relevance threshold $\tau$ to screen documents before retrieval. 
The threshold can either be publicly specified or privately estimated using a small portion of the privacy budget. 
The complete procedure is summarized in Algorithm~\ref{alg:dp_fix_tau} and the privacy guarantee is given in Theorem~\ref{lem:privacy_batch_rag_fix}. 
At a high level, the algorithm maintains a per-document privacy budget that is decremented whenever the document is retrieved. 
For each query, it first updates the active set of documents and then filters out most documents with scores below $\tau$. 
Among the remaining documents, the top-$k$ are selected by relevance, and a differentially private single-query RAG procedure is invoked to generate the response. 

Since whether a document exceeds the constant threshold $\tau$ depends only on its own score and not on the scores of other documents, the use of \textit{(Individual) R\'enyi Differential Privacy Filters} is valid. Consequently, for each query, privacy loss is charged only to the small subset of documents that pass the threshold, using a per-query budget $\varepsilon_q$, rather than to the entire corpus. The privacy guarantee of \dpfixtau{} is stated in Theorem~\ref{lem:privacy_batch_rag_fix}, and the proof is deferred to Appendix~\ref{apx:privacy_proof}.

\begin{theorem}[Privacy Guarantee of Algorithm~\ref{alg:dp_fix_tau}]\label{lem:privacy_batch_rag_fix}
    $\dpfixtau$ satisfies $\varepsilon$-differential privacy provided that the initial privacy budget assigned to each document $z \in D$ is at most $\varepsilon$. 
\end{theorem}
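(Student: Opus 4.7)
The plan is to invoke the individual privacy filter framework of \citet{feldman2021individual}: if for every document $z\in D\cup D'$ the \emph{individual} privacy loss attributable to $z$ across the full multi-query execution is bounded by $z$'s initial budget $\varepsilon_z \le \varepsilon$, the global mechanism is $\varepsilon$-DP under add/remove. I would therefore fix an arbitrary document $z^*$ and any neighboring corpora differing exactly at $z^*$, and argue that the per-document filter maintained by \dpfixtau\ is a valid upper bound on the individual loss at $z^*$ throughout the run.

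The critical property enabling this reduction is that the threshold screening step is \emph{per-document}: the predicate ``$\mathrm{score}(z^*,q) > \tau$'' depends only on $z^*$ and the public query $q$, with $\tau$ either publicly fixed in advance or drawn from a separately accounted budget. Hence when $z^*$ fails the threshold or has already been removed by its exhausted filter, the active sets on $D$ and $D'$ coincide, the top-$k$ inputs to the inner single-query DP-RAG are identical, and no cost is charged to $z^*$. When $z^*$ passes, the two top-$k$ inputs differ by at most one element (one insertion plus at most one displacement), so by the $\varepsilon_q$-DP guarantee of the inner single-query RAG subroutine the individual loss incurred at this query is at most a bounded multiple of $\varepsilon_q$; the filter deducts exactly this amount from the remaining budget of $z^*$ and refuses further use of $z^*$ before it can go negative.

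Summing along the query sequence and applying the Feldman--Zrnic filter theorem then yields individual $\varepsilon_{z^*}$-DP at every $z^*$, which, since $\varepsilon_{z^*} \le \varepsilon$ by hypothesis, gives the desired global $\varepsilon$-DP guarantee. The main obstacle I anticipate is the accounting subtlety around top-$k$ truncation: one must argue carefully that inserting $z^*$ into the top-$k$ displaces at most one other document, so the inner input still differs by a bounded number of add/remove edits, and that a document's filter is charged only when that document is actually passed to the inner mechanism. This decoupling, together with the data-independence of $\tau$, is precisely what keeps the per-document filters from interacting despite sharing a common ranking step.
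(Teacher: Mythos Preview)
Your proposal is correct and takes essentially the same approach as the paper: both invoke the individual R\'enyi filter framework of \citet{feldman2021individual} (specifically their Corollary~3.3), using the data-independence of the fixed threshold $\tau$ to ensure that whether $z^*$ enters the filtered set depends only on $z^*$ itself, and then charging $\varepsilon_q$ per query via the $\varepsilon_q$-DP guarantee of the $\dprag\circ\topk$ subroutine. The paper's own proof is a one-line reduction to its Theorem~\ref{lem:privacy_batch_rag_ada} analysis plus the Feldman--Zrnic corollary; your write-up is more explicit (and correctly flags the top-$k$ displacement as the place requiring care), but the underlying argument is the same.
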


\begin{algorithm}[H]
\small
\caption{\dpfixtau : Differentially Private \textbf{Mu}lti-Query \textbf{R}etrieval-\textbf{A}ugmented \textbf{G}eneration}\label{alg:dp_fix_tau}
\setcounter{AlgoLine}{0}
\KwIn{Private dataset $D$, sequence of queries $\{q_1,\dots,q_T\}$, per-query DP budget $\varepsilon_q$, $\#$retrieved documents $k$, maximum retrievals per document $M$, relevance threshold $\tau$} 
\KwSet{Initialize individual budget for each document $z \in D$: $\mathcal{E}(z) = M \cdot \varepsilon_q$}
\For{$t = 1,\dots,T$}{
    $A_t = \{z \in D \mid \mathcal{E}(z) \geq \varepsilon_q\}$ \tcp*[r]{Update active document set}
    
    $D_{q_t} = \{z \in A_t \mid r(z, q_t) > \tau\}$ \tcp*[r]{Filter relevant documents}
    
    \For{$z \in D_{q_t}$}{
        $\mathcal{E}(z) \gets \mathcal{E}(z) - \varepsilon_q$ \tcp*[r]{Update budget for retrieved documents}
    }
    
    $D_{q_t}^k = \topk(D_{q_t}, k, r(\cdot, q_t))$ \tcp*[r]{Select top-$k$ relevant documents}
    
    $a_t = \dprag(x, D_{q_t}^k, \llm, \varepsilon_q)$ \tcp*[r]{Generate DP response via Algo.~\ref{alg:dp-rag-v2}}
}
\KwRet{$(a_1, \dots, a_T)$}
\end{algorithm}

\subsection{DP-RAG with Adaptive Threshold}\label{sec:adaptive_tau}
The score distribution can vary substantially across different questions, making a single global threshold ineffective. 
To guarantee the performance of single-query DP-RAG, the threshold must be set low enough to retrieve sufficient documents for all queries. 
However, this often results in many unnecessary documents being retrieved: although single-query DP-RAG uses at most $K$ documents, any additional documents above $K$ still incur privacy loss, wasting budget on unused data. This inefficiency can significantly degrade performance when those documents are needed by later queries. To overcome this limitation, we propose \dpadaptovetau{}, which privately releases a query-specific threshold $\tau_t$ adapted to the relevance distribution of each query.

The adaptive procedure works by discretizing the relevance scores into bins and then releasing noisy prefix sums until the cumulative count of retrieved documents exceeds $K$. 
This mechanism tailors the cutoff of documents to each query, reducing unnecessary budget consumption on irrelevant documents and preserving utility across multiple queries. 
We will see in the experimental section that this approach especially yields clear utility gains on datasets with high correlated queries. 
The full procedure is summarized in Algorithm~\ref{alg:batch_dp_rag-v2}. %

\begin{algorithm}[ht]
\caption{\dpadaptovetau: DP \textbf{Mu}lti-Query \textbf{RAG} with \textbf{Ada}ptive Threshold}\label{alg:batch_dp_rag-v2}
\setcounter{AlgoLine}{0}
\KwIn{Private dataset $D$, sequence of queries $\{q_1, \ldots, q_T\}$, per-query budget $\varepsilon_q$, number of retrieved documents $k$, maximum retrievals per document $M$} 

\KwSet{Initialize budget for each $z \in D$: $\mathcal{E}(z) \gets M \cdot \varepsilon_q$. 
Split budget: $\varepsilon_q = \varepsilon_{\mathrm{thr}} + \varepsilon_{\mathrm{RAG}}$.}

\KwRequire{Discretization of similarity scores into bins $[a_i,a_{i+1})_{i=1}^B$}

\For{$t = 1,\dots,T$}{
    \tcc{Step 1: Adaptive thresholding via noisy prefix sums}
    $\Tilde{s} \gets 0$, $A_t \gets \varnothing$\;
    
    \For{$i = 1,\ldots,B$}{
        $A_t^{(i)} = \{z \in D \mid r(z, q_t) \in [a_i, b_i], \, \mathcal{E}(z) \geq \varepsilon_{\mathrm{thr}}\}$\;
        
        $\Tilde{s} \gets \Tilde{s} + |A_t^{(i)}| + \mathrm{Lap}(1/\varepsilon_{\mathrm{thr}})$\;
        
        $A_t \gets A_t \cup A_t^{(i)}$\;
        
        \For{$z \in A_t^{(i)}$}{
            $\mathcal{E}(z) \gets \mathcal{E}(z) - \varepsilon_{\mathrm{thr}}$\;
        }
        
        \If{$\Tilde{s} \geq k$}{$\tau_t=a_i$; break \tcp*[r]{Release threshold}} 
    }
    
    \tcc{Step 2: DP-RAG on adaptively selected active set}
    $A_t^\prime = \{z \in A_t \mid \mathcal{E}(z) \geq \varepsilon_{\mathrm{RAG}}\}$\;
    
    $D_{q_t} = \topk(A_t^\prime, k, r(\cdot, q_t))$\;
    
    $a_t = \dprag(x, D_{q_t}, \llm, \varepsilon_{\mathrm{RAG}};\tau_t)$ \tcp*[r]{single-query RAG, Algorithm~\ref{alg:dp-rag-v2}}
    
    \For{$z \in A_t^\prime$}{
        $\mathcal{E}(z) \gets \mathcal{E}(z) - \varepsilon_{\mathrm{RAG}}$\;
    }
}
\KwRet{$(a_1, \ldots, a_T)$}
\end{algorithm}

Notice that in Algorithm~\ref{alg:batch_dp_rag-v2}, we use $k$ as a stopping criterion instead of releasing differentially private top-$k$ relevance scores. 
This is because releasing a noisy top-$k$ score for each query would make the privacy budget grow linearly with the number of queries and incur loss on all documents, thereby breaking the per-document privacy filter. By contrast, our prefix-sum approach (Step 1 of Algorithm~\ref{alg:batch_dp_rag-v2}) incurs privacy loss only on the documents that appear in the released prefix sums, while all other documents remain untouched. This concentrates the privacy cost of this step still on a small subset, yielding tighter accounting and more efficient budget use across multiple queries. The privacy guarantee of \dpadaptovetau{} is stated in Theorem~\ref{lem:privacy_batch_rag_ada}, and the proof is deferred to Appendix~\ref{apx:privacy_proof}.

\begin{theorem}[Privacy Guarantee of Algorithm~\ref{alg:batch_dp_rag-v2}]\label{lem:privacy_batch_rag_ada}
    $\dpadaptovetau$ satisfies $\varepsilon$-differential privacy provided that the initial privacy budget allocated to each document $z \in D$ is at most $\varepsilon$.
\end{theorem}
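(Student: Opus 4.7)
The plan is to reduce the theorem to an application of the individual privacy filter framework of \citet{feldman2021individual}, by showing that the per-document privacy loss incurred by Algorithm~\ref{alg:batch_dp_rag-v2} is at most $\varepsilon$ for every $z\in D$. Since the algorithm targets pure DP, I would instantiate the filter for $\varepsilon$-DP and identify the budget variable $\mathcal{E}(z)$ together with the guards $\mathcal{E}(z)\ge\varepsilon_{\mathrm{thr}}$ and $\mathcal{E}(z)\ge\varepsilon_{\mathrm{RAG}}$ as the halting rule that prevents any interaction with $z$ after its budget is exhausted. The structure of the proof then mirrors that of Theorem~\ref{lem:privacy_batch_rag_fix}, with the new ingredient being the analysis of the adaptive prefix-sum release in Step~1.

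The first step is to bound the per-query loss on a fixed document $z$. For Step~1, the key observation is that the score bins $[a_i,a_{i+1})$ are disjoint, so $z$ belongs to at most one $A_t^{(i^\ast)}$; its presence therefore affects only the single Laplace release $|A_t^{(i^\ast)}|+\mathrm{Lap}(1/\varepsilon_{\mathrm{thr}})$, and the full released vector $(\tilde s_1,\dots,\tilde s_I)$ together with the stopping index $I$ and threshold $\tau_t=a_I$ is a post-processing of that one release and of quantities independent of $z$. Hence Step~1 contributes at most $\varepsilon_{\mathrm{thr}}$ to $z$'s loss, and only when $z\in A_t^{(i^\ast)}$ with $i^\ast\le I$. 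For Step~2, if $z\in A_t'$ the document is passed to \dprag, which is $\varepsilon_{\mathrm{RAG}}$-DP by assumption, with $\tau_t$ and $A_t'$ treated as already-released quantities from Step~1. The per-query, per-document loss is therefore at most $\varepsilon_{\mathrm{thr}}+\varepsilon_{\mathrm{RAG}}=\varepsilon_q$.

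Next, I would verify that the bookkeeping in Algorithm~\ref{alg:batch_dp_rag-v2} faithfully upper-bounds these losses: $\mathcal{E}(z)$ is decremented by $\varepsilon_{\mathrm{thr}}$ exactly when $z\in A_t^{(i)}$ for some $i$ reached before the break, and by $\varepsilon_{\mathrm{RAG}}$ exactly when $z\in A_t'$. Because the algorithm excludes from $A_t^{(i)}$ (resp.\ $A_t'$) any document whose remaining budget has dropped below $\varepsilon_{\mathrm{thr}}$ (resp.\ $\varepsilon_{\mathrm{RAG}}$), $z$ is never touched once its initial allowance $M\cdot\varepsilon_q\le\varepsilon$ has been spent. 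Invoking the individual privacy filter theorem uniformly over $z$ then yields $\varepsilon$-DP for the overall mechanism.

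The main obstacle is justifying that the adaptive, data-dependent control flow of Step~1 does not leak additional information about $z$ beyond the single $\varepsilon_{\mathrm{thr}}$ charged above. The stopping decision at iteration $i$ is computed from $\tilde s_i$, which depends on $z$ once $i\ge i^\ast$, and the later counts $|A_t^{(i)}|$ themselves depend on budget states carried over from previous queries, so one cannot appeal to a non-adaptive composition argument. The filter framework is designed exactly for this kind of fully adaptive composition, so the task reduces to checking that each per-query release is locally DP with respect to the correct individual budget and that the guards correctly gate future access to $z$; once this is done, the theorem follows from the individual filter theorem.
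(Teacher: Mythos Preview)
The proposal is correct and follows essentially the same approach as the paper's proof: both reduce to the individual R\'enyi filter of \citet{feldman2021individual} (their Corollary~3.3), bound the per-document loss in Step~1 by $\varepsilon_{\mathrm{thr}}$ via the disjointness of the score bins, bound Step~2 by $\varepsilon_{\mathrm{RAG}}$ via the single-query guarantee, and verify that the budget tracker $\mathcal{E}(z)$ together with the guards constitutes a valid filter. The only cosmetic difference is that the paper phrases the Step~1 bound as a Bayes'-rule decomposition of the log-likelihood ratio into pre-$i^\ast$, at-$i^\ast$, and post-$i^\ast$ terms (the first and last vanishing), whereas you cast it as post-processing of the single Laplace release at $i^\ast$; the two formulations are equivalent.
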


\section{Experiment}\label{sec: exp_res}
\begin{figure}[!t]
\centering
\begin{subfigure}[t]{0.3\textwidth}
    \includegraphics[width=\linewidth]{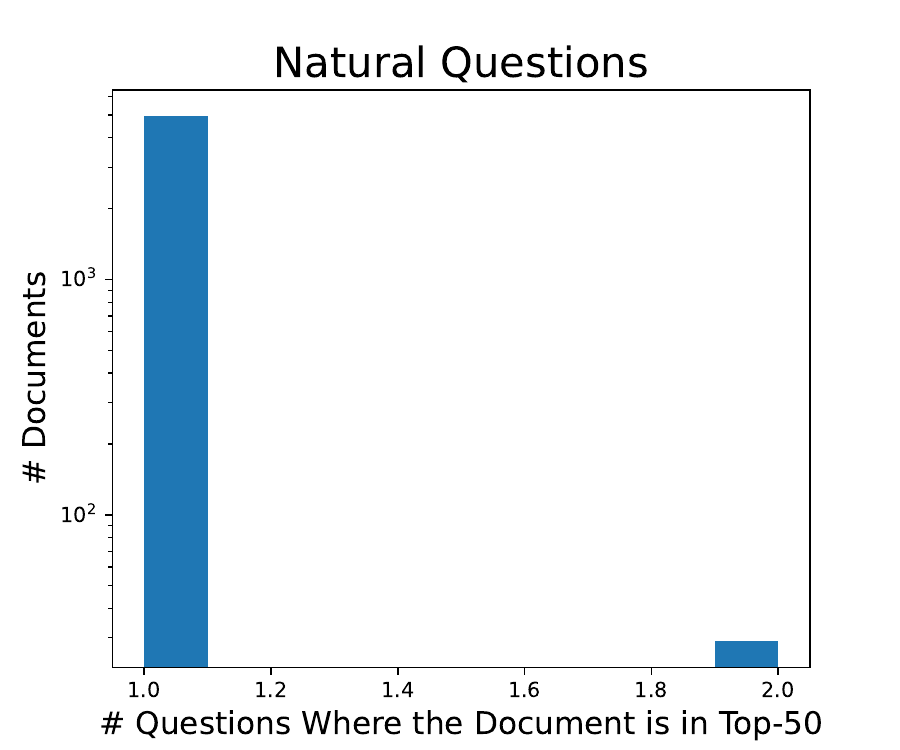}
\end{subfigure}
\begin{subfigure}[t]{0.3\textwidth}
    \includegraphics[width=\linewidth]{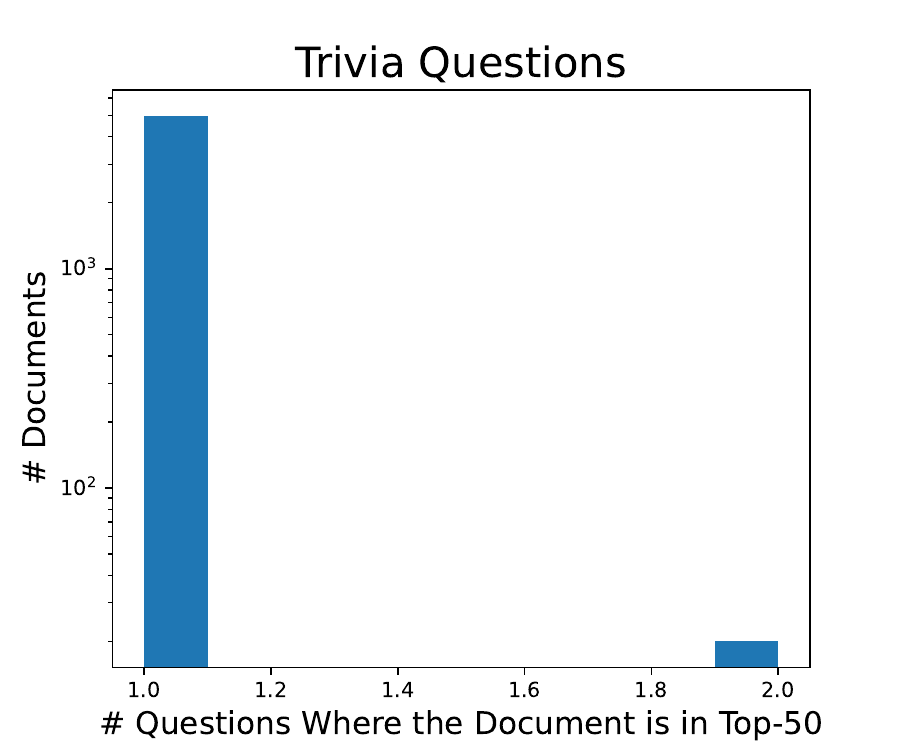}
    \end{subfigure}
\begin{subfigure}[t]{0.3\textwidth}

    \includegraphics[width=\linewidth]{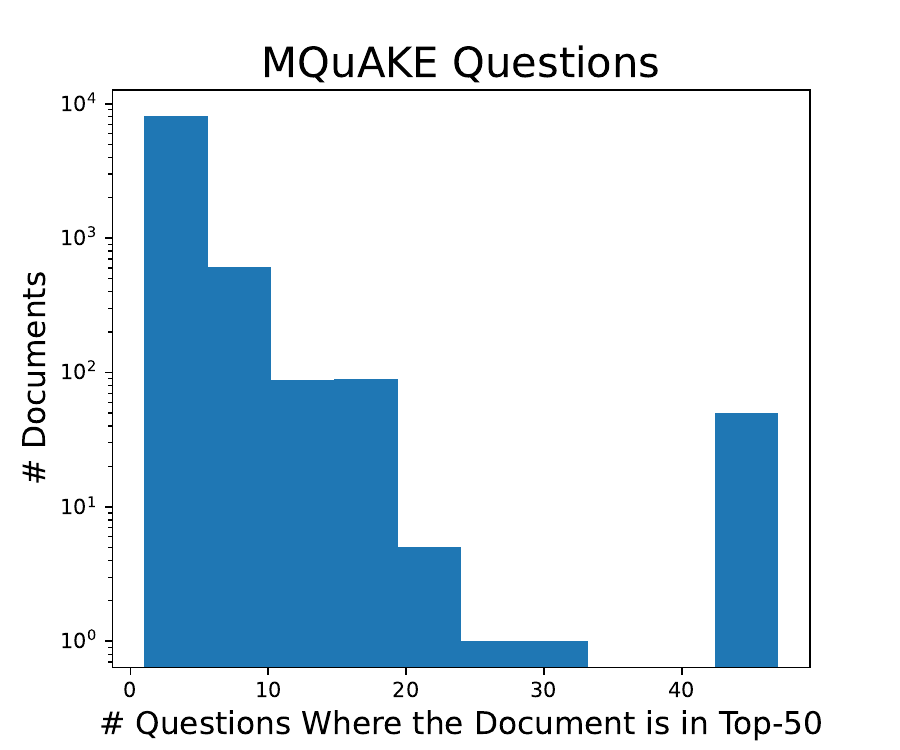}
\end{subfigure}
\captionsetup{font=small}
\caption{Histogram of document reuse across questions. Each bar shows how many questions a document appears in among the top-$K$ retrieved results ($K=50$). The x-axis indicates the number of questions per document, and the y-axis shows the count of such documents. 
}
\vspace{-2ex}
\label{fig:hist}
\end{figure}

\subsection{Dataset}
\textbf{Datasets set-up.}
We first evaluate our methods on \textbf{two independent question sets}: \textit{Natural Questions} and \textit{Trivia Questions}. These are standard benchmarks for evaluating RAG systems and have been used in prior work on per-query DP for RAG~\citep{koga2024privacy}. Following their setup, we randomly subsample 100 questions from each dataset to reduce computational overhead. Importantly, the questions are independent of one another, and each requires a disjoint set of relevant documents from the external database. To quantify document reuse, we examine how frequently each document appears in the top-$K$ retrieved results ($K=50$) across questions. As shown in Figure~\ref{fig:hist}, in both \textit{Natural Questions} and \textit{Trivia Questions}, most documents are retrieved for only one or two queries. \textit{Thus, we expect \dpfixtau{} to perform sufficiently well on these two datasets.}

Second, we consider a \textbf{correlated question set}, \emph{MQuAKE}~\citep{zhong2023mquake}. This dataset contains sequences of semantically related single-hop questions that together form multi-hop reasoning chains. We select 100 such sequences, yielding 400 individual questions for evaluation. Since questions in the same sequence share entities (subjects or objects), their relevant documents substantially overlap. As shown in Figure~\ref{fig:hist}, many documents appear across multiple questions. \textit{We therefore expect \dpadaptovetau{} to have an advantage over \dpfixtau{}.}

Finally, we evaluate on \emph{ChatDoctor}~\citep{li2023chatdoctor}, a \textbf{privacy-sensitive application of RAG} in the healthcare domain. This dataset consists of QA interactions between patients and doctors. We sample 100 patient questions as our test set. \textit{This evaluation tests the effectiveness of our methods in a real-world sensitive setting and their robustness against privacy attacks.}

\textbf{External datasets reflecting both standard and privacy-sensitive settings.}
For Natural Questions, Trivia Questions, and MQuAKE Questions, we use Wikipedia of $\sim20M$ documents as the external knowledge source following the standard RAG setup~\citep{chen2017reading, lewis2020retrieval}.
For ChatDoctor Questions, the external dataset consists of the remaining $\sim200K$ QA pairs from the original ChatDoctor dataset, excluding the 100 patient questions used for testing. This setup reflects a realistic privacy-sensitive application, where the external corpus contains private information.

\textbf{QA evaluation metric.} For Natural Questions, Trivia Questions and MQuAKE Questions, the datasets provide a list of all acceptable correct answers for each question. Following the evaluation protocol of \citet{koga2024privacy}, we use the \textit{Match Accuracy} metric: a prediction is scored as 1 if it contains any correct answer, and 0 otherwise.  For Chatdoctor Questions, we adopt the evaluation metric from the original dataset paper, using the F1 score of BERTScore~\citep{bert-score} to measure semantic similarity between the predicted response and the ground-truth answer.
\vspace{-1em}

\subsection{Model and Method set-up}
\textbf{Model set-up.} Our RAG pipeline integrates three pre-trained LLMs: OPT-1.3B~\citep{zhang2022opt}, Pythia-1.4B~\citep{biderman2023pythia}, and Mistral-7B~\citep{jiang2023mistral7b}. For document retrieval, we use the Dense Passage Retriever (DPR)~\citep{karpukhin2020dense} to compute dense query-document relevance scores.

\textbf{Baseline methods.} We compare our two proposed methods with five baselines. The first is \textbf{$\naivedprag$} (Algorithm~\ref{alg:naive_batch_dp_rag}), which applies the per-question DP RAG method, DPSparseVoteRAG, independently to each query and uses the standard sequential composition theorem~\citep{dwork2006our} to compute the overall privacy guarantee.
The second baseline applies subsampling amplification to the first baseline, \naivedprag, which we called \textbf{\sampledprag}. Specifically, for each query, we first subsample the external dataset using Poisson sampling with rate $\eta$, and then apply DPSparseVoteRAG on this subsampled dataset. The overall privacy guarantee is then computed using sequential composition combined with the amplification by subsampling \citep{balle2018privacy}.
The third baseline privatizes the external dataset of RAG under differential privacy (DP) and then uses the resulting synthetic dataset as the knowledge source for evaluation. In this setup, the answers are guaranteed to satisfy DP since they are derived from a privatized dataset. We adopt \textbf{Private Evolution} (PE; \citet{xie2024differentially}), a state-of-the-art DP synthetic text generation method that also aligns with the query-access setting of RAG. Specifically, PE first queries an LLM to produce an initial dataset within the same domain as the private corpus, and then refines its distribution under DP to better approximate that of the private dataset. To ensure consistency, for each pretrained LLM used in RAG, we use the same model as the query API in PE.
The other two are non-private baselines: \textbf{Non-RAG}, which generates answers using the pretrained LLM without retrieval, and \textbf{Non-Private-RAG}, which performs retrieval-augmented generation without any privacy mechanism. We describe implementation details in Appendix~\ref{app:exp}.

\textbf{Privacy budget setup for DP algorithms.} 
Following the setup in \citet{koga2024privacy}, we vary the per-query RAG privacy budget $\varepsilon_{q} \in \{2, 5, 10, 15, 20, 30, 40\}$ to explore the privacy-utility trade-off. For $\naivedprag$, the total privacy budget is $T \cdot \varepsilon_{q}$, where $T$ is the number of questions. For \dpfixtau\ and \dpadaptovetau, the total budget is $M \cdot \varepsilon_{q}$, where $M$ is the number of retrieved documents with nonzero privacy loss\footnote{ To enable a meaningful comparison, we convert our privacy guarantee, originally expressed in $(\infty,\varepsilon)$-RDP, into an equivalent $\varepsilon$-DP guarantee \citep{mironov2017renyi}. }. In our main results, we conservatively set $M = 1$ for a realistic privacy region in \dpfixtau{} and \dpadaptovetau{} and set $\varepsilon_{\rm thr}$ as $1.0$ in \dpadaptovetau\footnote{We will see the detailed analysis of the choices of $M$ and $\varepsilon_{\rm thr}$ in Section~\ref{sec:exp_further_analysis}}.
For the baseline \sampledprag, we consider the subsampling rate $\eta=0.1, 0.01, 0.001$ and calculate the corresponding $\varepsilon_q$ to satisfy the the varying total budget $\{2, 5, 10, 15, 20, 30, 40\}$.
For the baseline PE, we test with $\varepsilon\in\{10, 200\}$.

\textbf{Membership inference attack in RAG.} To assess the effectiveness of our privacy-preserving methods, we evaluate them against the membership inference attack (MIA). 
The objective of MIA is as follows: given a candidate document $x$ and a model system $R(\cdot; D)$ trained on a private dataset $D$, the adversary aims to determine whether $x \in D$ by computing a membership score $s(x, R(\cdot; D))$. Without loss of generality, we assume higher scores indicate higher membership likelihood. Applying the attack to an in-distribution set $D_{\text{in}} \subset D$ and an out-of-distribution set $D_{\text{out}}$ (with no overlap with $D$) allows us to derive the TPR–FPR curve and compute the AUC, which serves as the evaluation metric for attack success. 

We focus on scenarios where the adversary can issue multiple queries to the system, as this setting substantially amplifies the attack strength. To model this, we adopt the \textit{Interrogation Attack (IA)}~\citep{naseh2025riddle}, a state-of-the-art MIA specifically designed to exploit multi-query access in RAG systems. For each document $x$, IA generates $m=30$ tailored questions together with their corresponding answers implied by $x$. Then each question is concatenated with the necessary context to ensure the target document can be retrieved, and the query is then submitted to the RAG system. The membership score is defined as the accuracy of the RAG system across these $m$ questions, where higher accuracy implies a greater likelihood that the document is present in the external dataset and is being retrieved to answer the queries. Additional implementation details, including the question generation process, are provided in Appendix~\ref{app:exp}.

\subsection{Main Results}

\begin{figure}[!t]
\centering
\begin{subfigure}[t]{0.32\textwidth}
    \includegraphics[width=\linewidth]{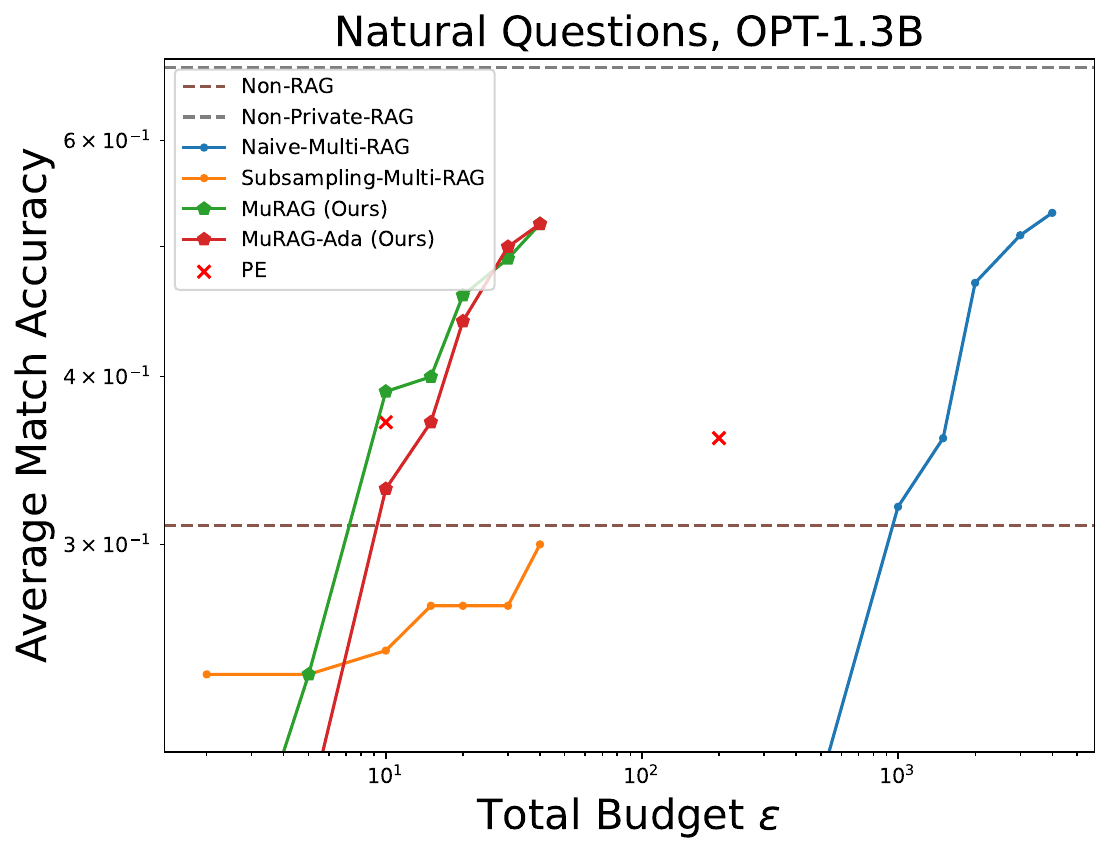}
\end{subfigure}
\hfill
\begin{subfigure}[t]{0.32\textwidth}
    \includegraphics[width=\linewidth]{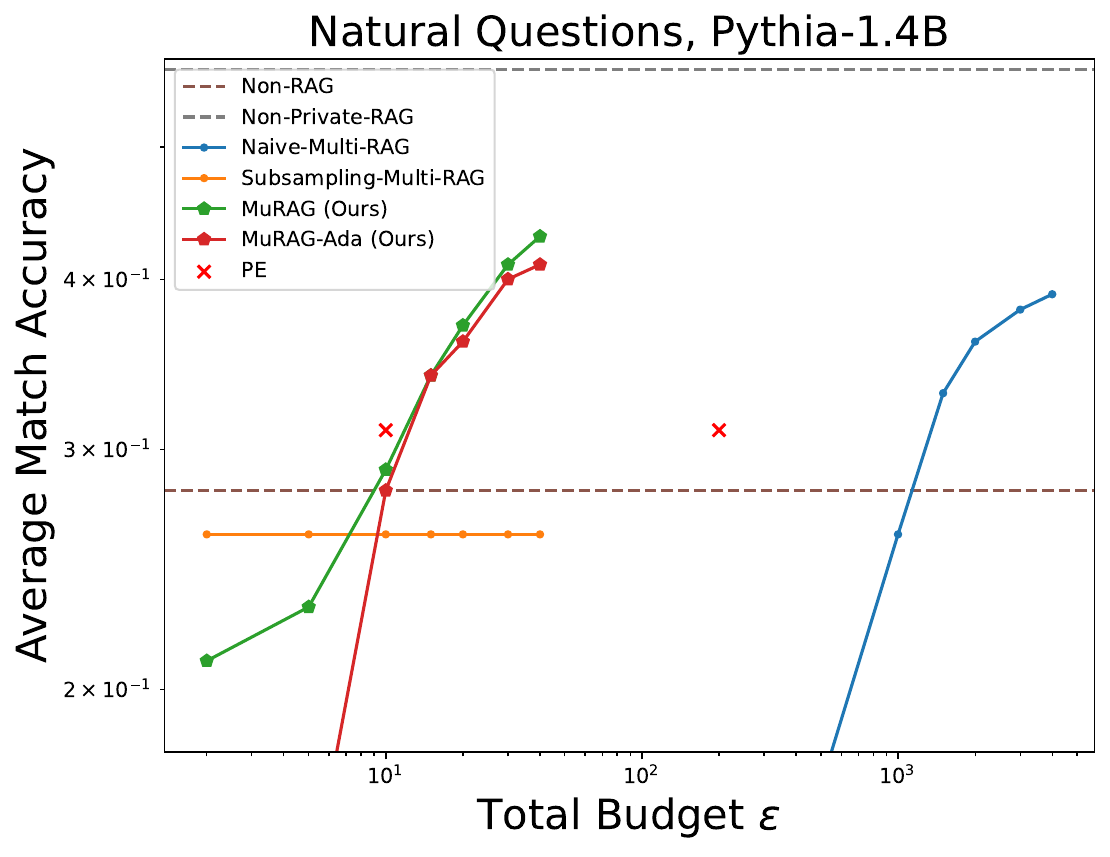}
\end{subfigure}
\hfill
\begin{subfigure}[t]{0.32\textwidth}
    \includegraphics[width=\linewidth]{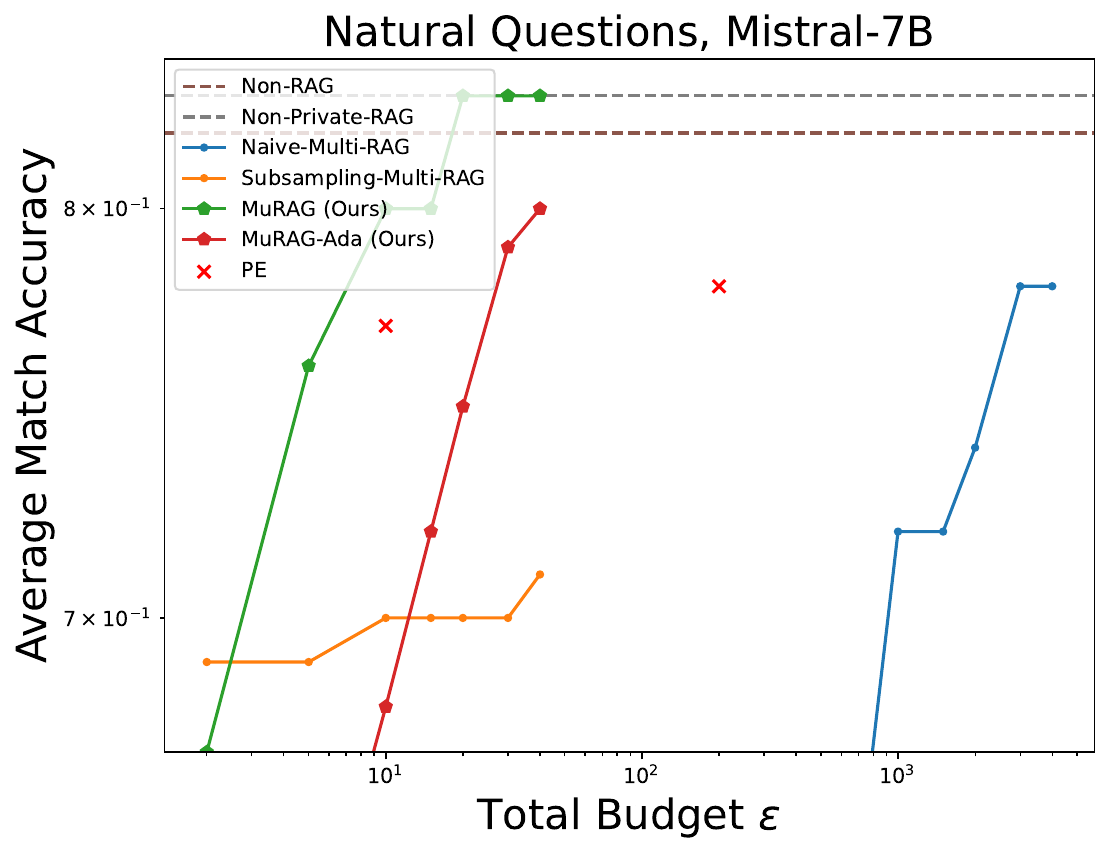}
\end{subfigure}

\begin{subfigure}[t]{0.32\textwidth}
    \includegraphics[width=\linewidth]{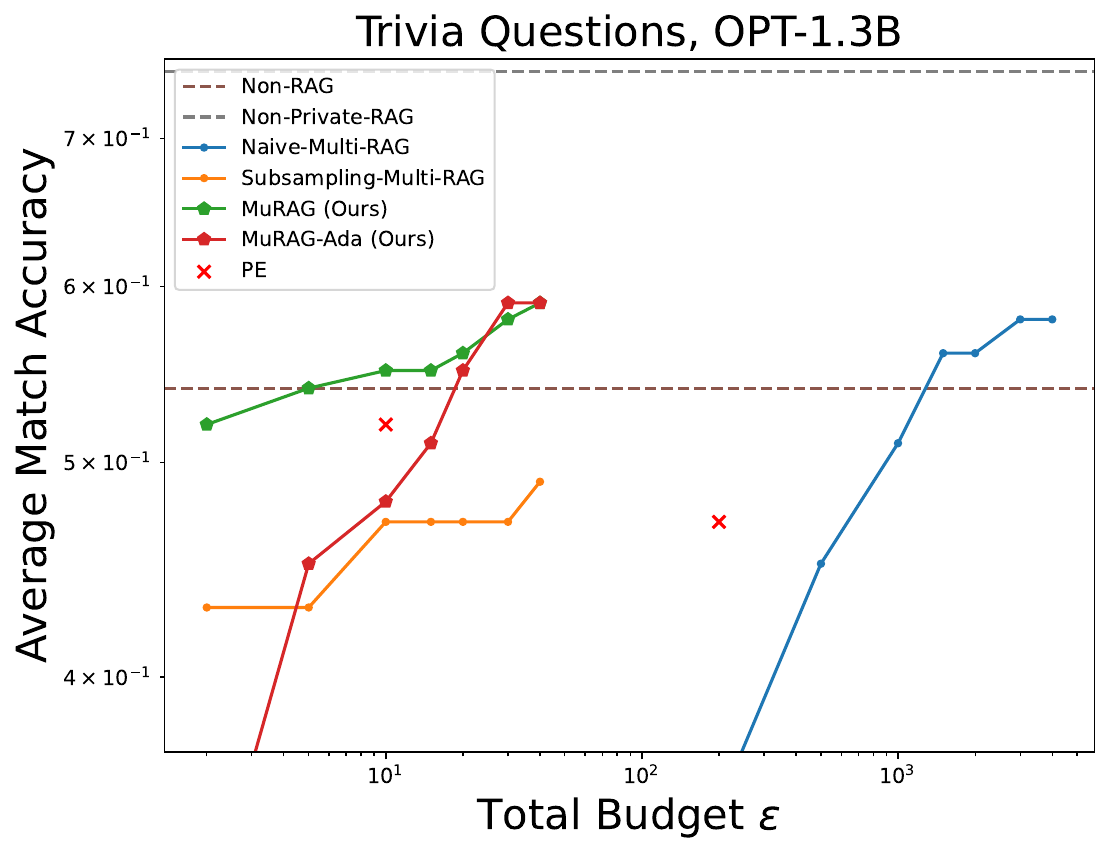}
\end{subfigure}
\hfill
\begin{subfigure}[t]{0.32\textwidth}
    \includegraphics[width=\linewidth]{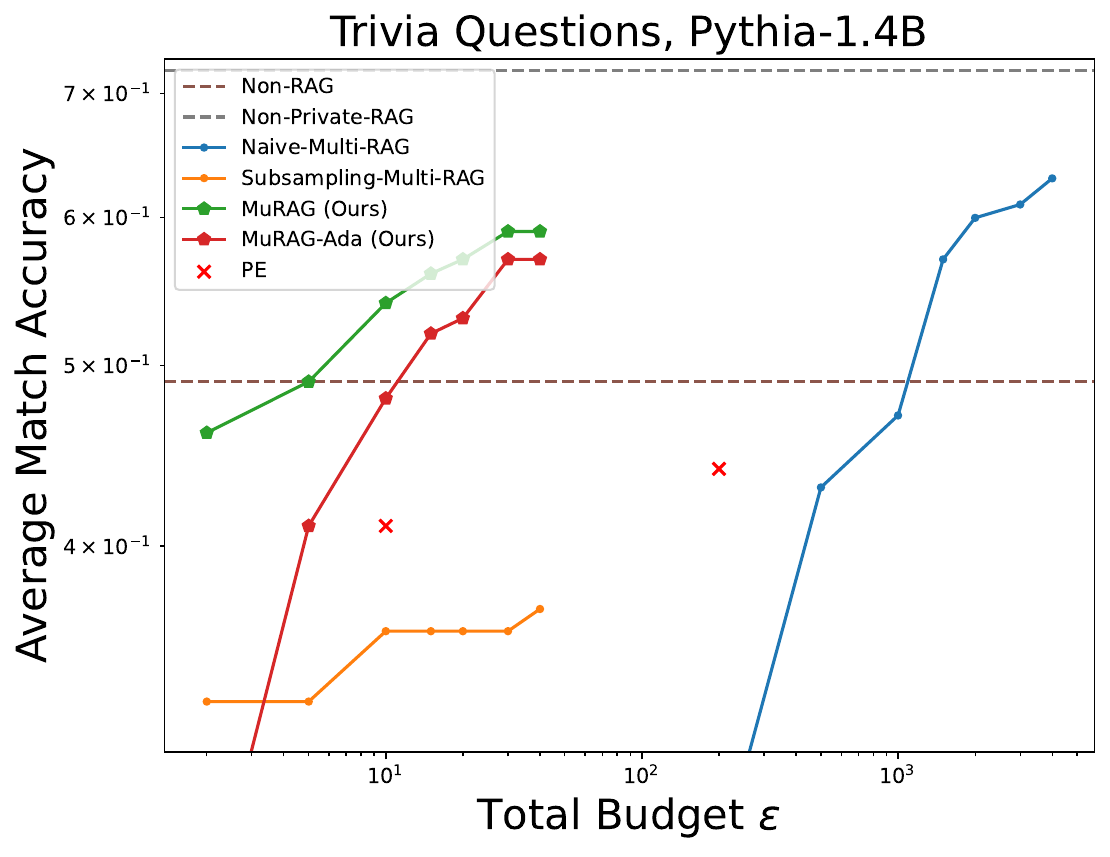}

\end{subfigure}
\hfill
\begin{subfigure}[t]{0.32\textwidth}
    \includegraphics[width=\linewidth]{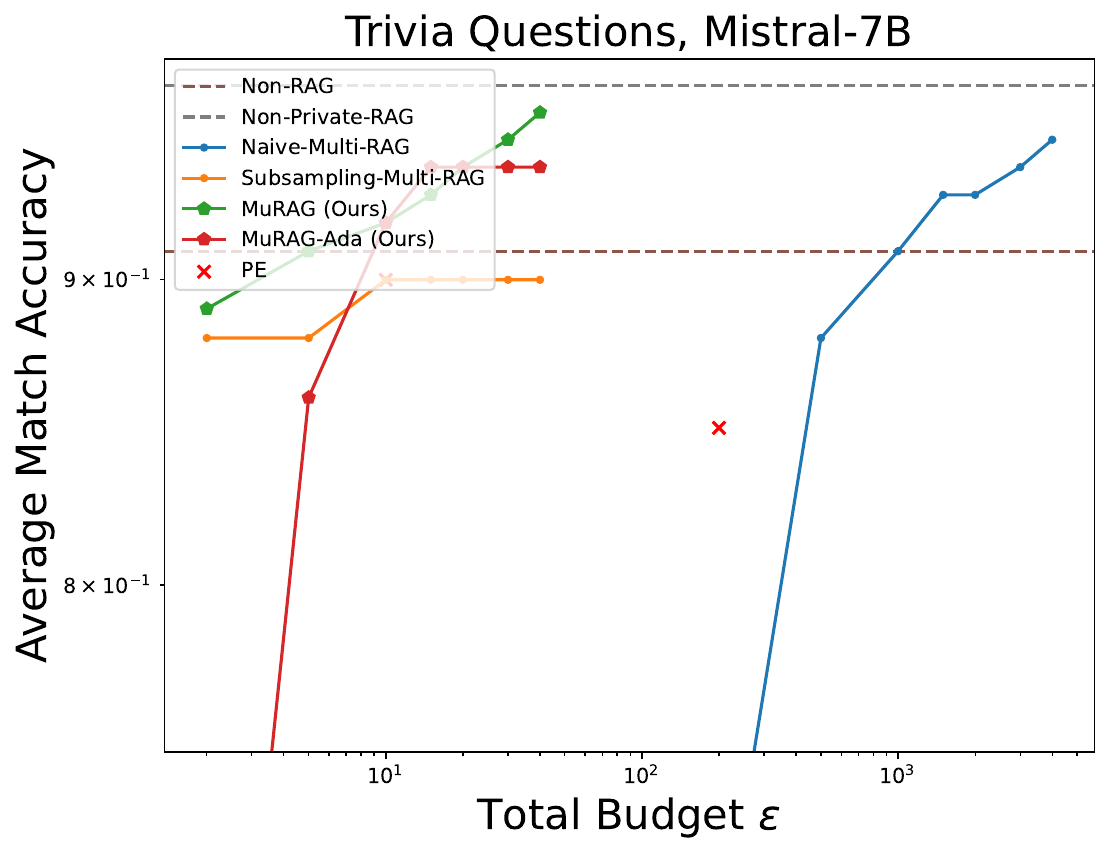}
\end{subfigure}

\begin{subfigure}[t]{0.32\textwidth}
    \includegraphics[width=\linewidth]{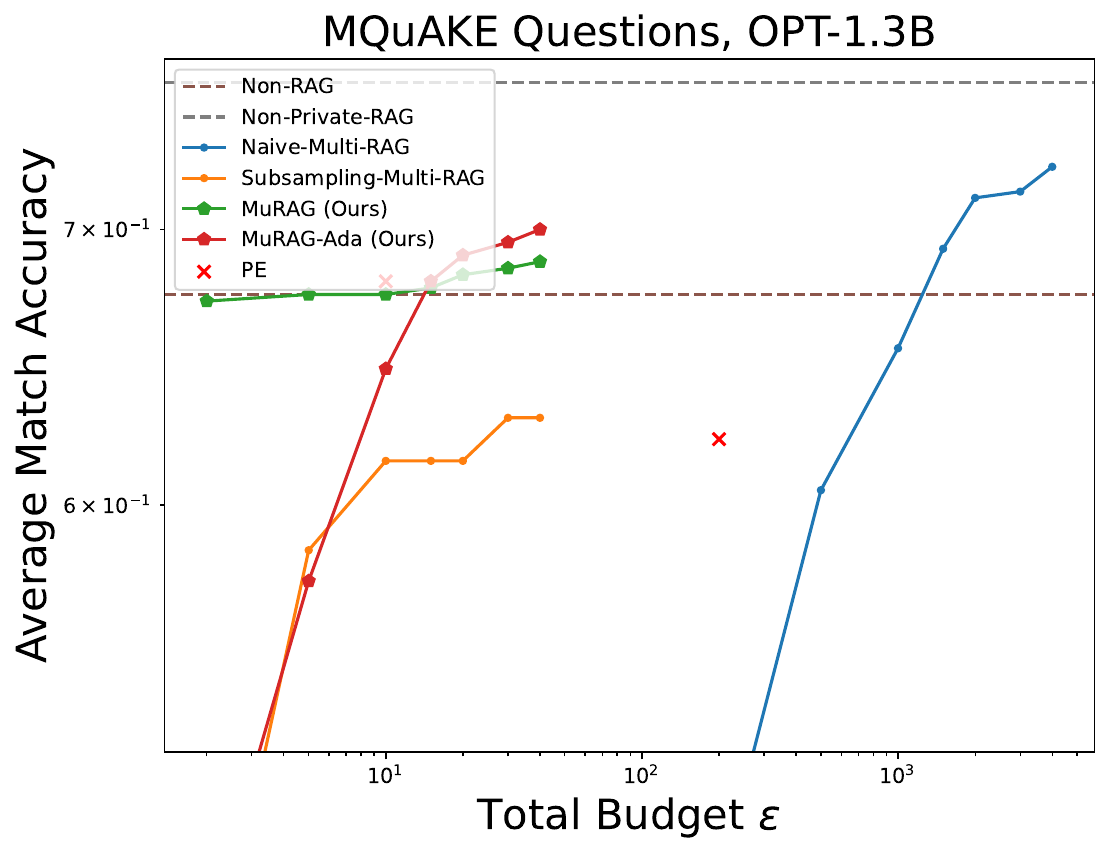}
\end{subfigure}
\hfill
\begin{subfigure}[t]{0.32\textwidth}
    \includegraphics[width=\linewidth]{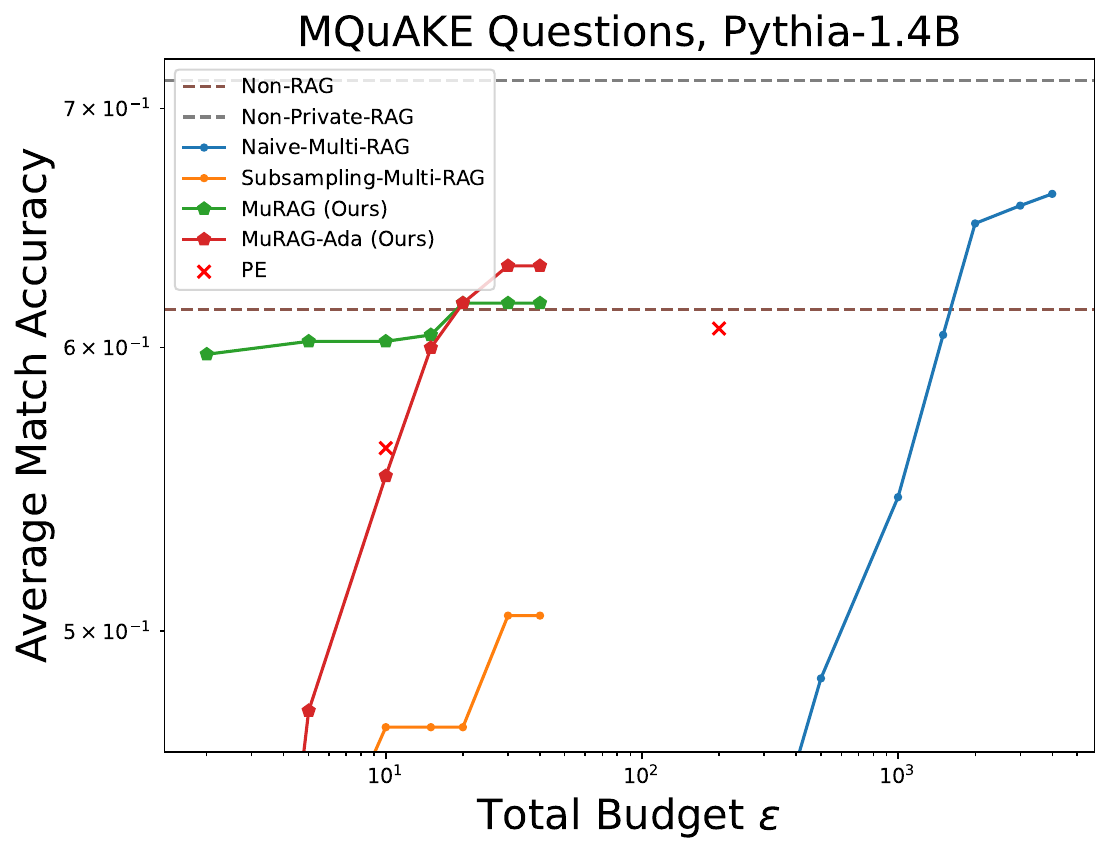}
\end{subfigure}
\hfill
\begin{subfigure}[t]{0.32\textwidth}
    \includegraphics[width=\linewidth]{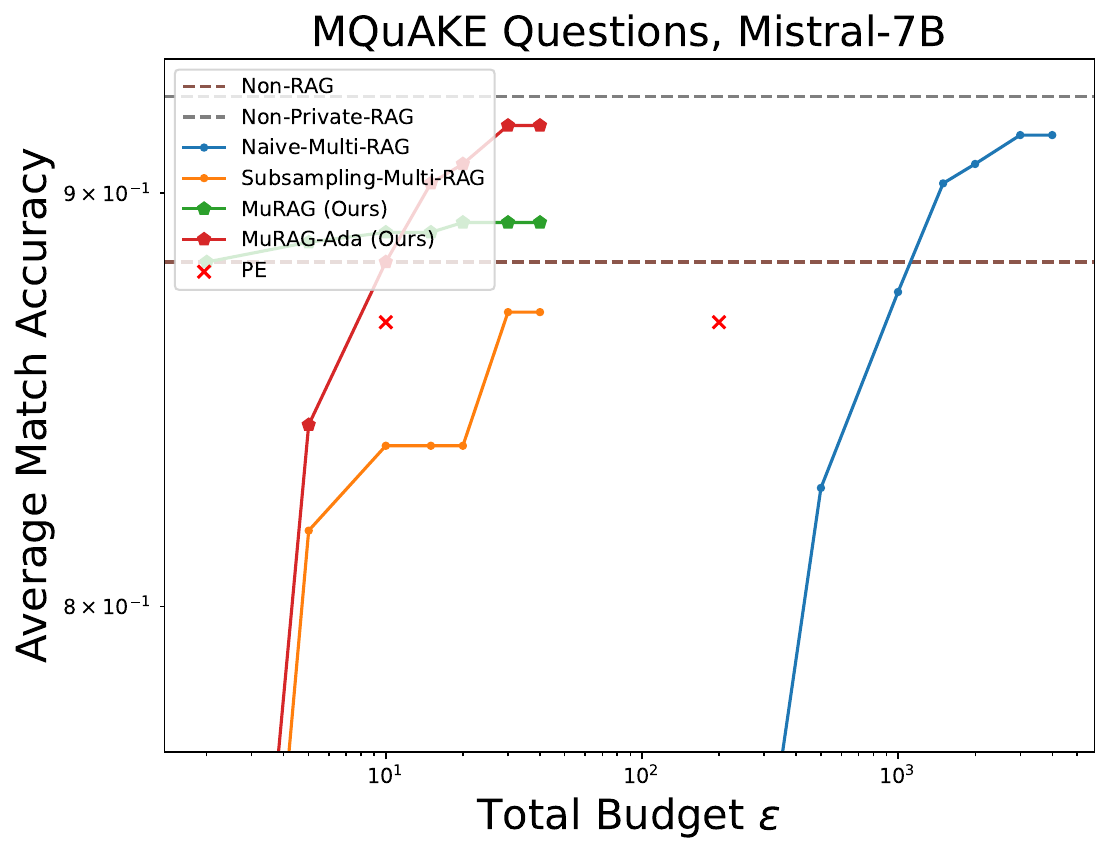}
\end{subfigure}
\captionsetup{font=small}
\caption{Privacy-Utility tradeoffs of our two proposed methods (\dpfixtau\ and \dpadaptovetau) compared to baselines  across three pretrained LLMs and two categories of question sets.
}
\label{fig:main}
\end{figure}
\vspace{-1em}

\textbf{Results on two standard RAG benchmarks (independent question sets).}
Figure~\ref{fig:main} shows the performance of our two proposed methods compared with three baselines across three pretrained LLMs on \textit{Natural Questions} and \textit{Trivia Questions}. Both of our methods outperform the Non-RAG baseline in most cases under a total privacy budget of $\varepsilon = 10$.

In contrast, all DP baselines (\textsc{Naive-Multi-RAG}, \sampledprag{}, PE) either underperform the Non-RAG model or require an impractically large privacy budget to achieve comparable performance.
The baseline \textsc{Naive-Multi-RAG} requires an impractically large budget, exceeding $\varepsilon = 10^3$, to achieve comparable utility. This highlights that our approaches make differential privacy practical in the multi-query RAG setting by leveraging more tailored compositions, enabling strong utility within a realistic privacy budget. The \sampledprag{} baseline consistently underperforms the Non-RAG model. This degradation is likely due to the reduced number of effective documents (that provide the ground truth answers) after subsampling. For example, if there are $50$ relevant documents for a query, subsampling at a rate of $0.1$ leaves only about $5$ accessible documents, making it difficult for \textsc{DPSparseVoteRAG} to produce correct answers within the per-query budget $\varepsilon_q\approx 0.71$ (computed from the overall budget $\varepsilon$, total queries $T=100$, and sampling rate $\eta=0.1$). The results demonstrate that the individual privacy accounting framework provides a more effective composition mechanism than subsampling amplification for multi-query RAG problem; a more detailed discussion of this limitation is provided in Section~\ref{sec:discuss}.
The \textsc{PE} baseline performs even worse than Non-RAG at $\varepsilon=200$ for many settings, which we attribute to objective misalignment: PE optimizes for distributional similarity (e.g., measured by Fréchet Inception Distance (FID; ~\citet{heusel2017gans})) rather than preserving factual content. Indeed, we find PE achieves a better FID score at $\varepsilon=200$ but yields lower task performance than at $\varepsilon=10$ on the setting of Trivia Questions and OPT-1.3B, further supporting this explanation.\footnote{We confirm that the FID score improves from $\varepsilon=10$ to $\varepsilon=200$ ($0.066$ to $0.036$; lower is better) on the setting of Trivia Questions and OPT-1.3B, yet RAG utility drops, underscoring the mismatch between FID and factual fidelity required for RAG.}

Lastly, on these two datasets, \dpfixtau{} outperforms \dpadaptovetau{}, which aligns with our expectations. Since the questions are independent, adaptive thresholding provides little benefit and additionally consumes extra privacy budget.

\textbf{Results on multi-hop questions (correlated question set).}
Figure~\ref{fig:main} shows the performance of our two proposed methods compared with three baselines across three pretrained LLMs on \textit{MQuAKE Questions}. Overall, the relative trends between our methods and the baselines are consistent with the independent question setting. However, a key difference emerges in the comparison between our two approaches: \dpadaptovetau{} performs significantly better than \dpfixtau{}. This result is aligned with our intuition, as adaptive thresholding is particularly advantageous when questions are correlated and share overlapping relevant documents.

\begin{figure}
    \centering
    \includegraphics[width=0.8\textwidth]{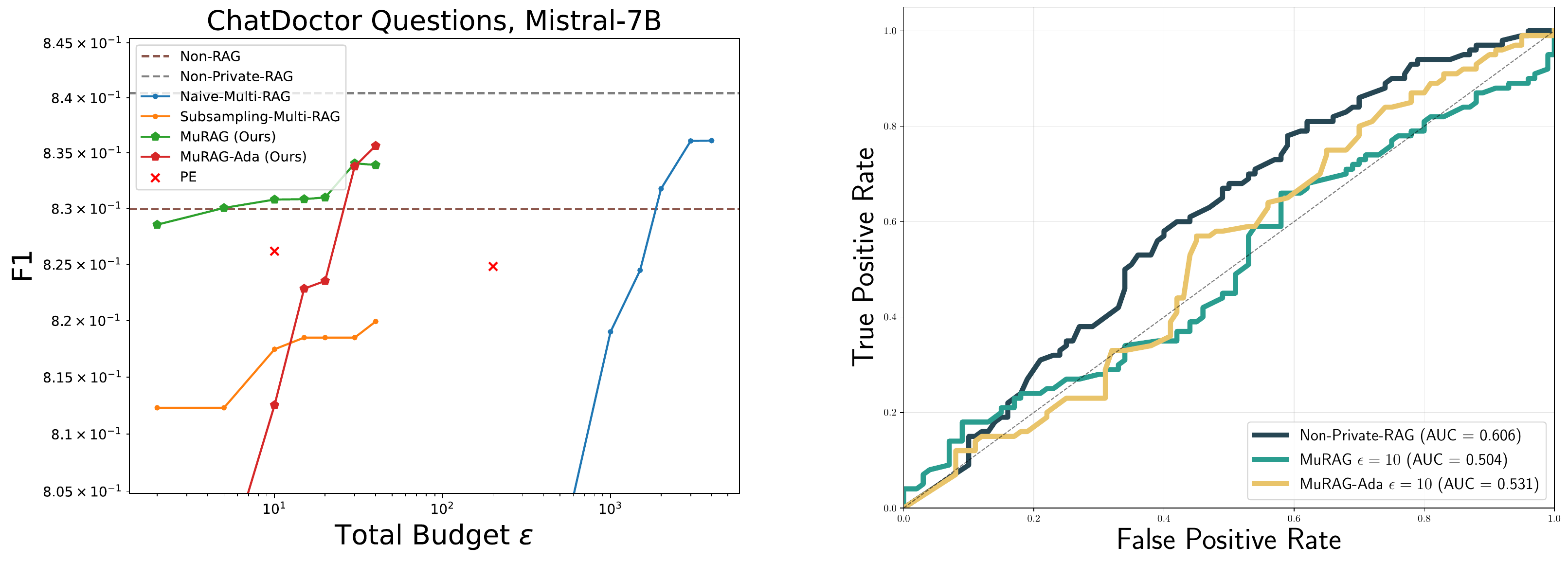}
    \captionsetup{font=small}
    \caption{\textbf{Left:} Privacy-utility tradeoffs of our two methods and baselines. \textbf{Right:} TPR-FPR curves of \textsc{IA} (Membership Inference Attack with multiple queries). Both experiments are conducted with Mistral-7B and ChatDoctor datasets.}
    \label{fig:chatdoctor}
\end{figure}

\textbf{Results on privacy-sensitive application.} The left plot in Figure~\ref{fig:chatdoctor} shows the performance of our methods and baselines on Mistral-7B with the ChatDoctor dataset. The results mirror the trends observed in the previous benchmarks: both of our methods outperform the baselines in this practical, privacy-sensitive setting. In particular, \dpfixtau{} surpasses the Non-RAG baseline at $\varepsilon=10$.%

We also evaluate robustness against the Interrogation Attack (IA) on ChatDoctor. Specifically, we test three RAG systems: Non-Private-RAG, \dpfixtau{} ($\varepsilon=10$), and \dpadaptovetau{} ($\varepsilon=10$). The right plot in Figure~\ref{fig:chatdoctor} reports the corresponding TPR–FPR curves. Without protection, \textsc{IA} achieves a non-trivial AUC of $\approx 0.6$. In contrast, both of our methods reduce the AUC to $\approx 0.5$, making the attack ineffective. These findings demonstrate that our approaches provide practical privacy protection at $\varepsilon=10$ in a real-world sensitive application.

\textbf{Takeaway.} Across all evaluations, our methods consistently outperform baseline approaches under practical privacy budgets. On independent question sets, \dpfixtau{} achieves strong performance as expected, while on correlated multi-hop questions, \dpadaptovetau{} shows clear advantages due to its adaptive thresholding. Finally, in the privacy-sensitive ChatDoctor application, both methods not only improve utility over baselines but also effectively mitigate state-of-the-art membership inference attacks. Together, these results demonstrate that our approaches make differentially private RAG both practical and robust across diverse settings.
\vspace{-0.6em}

\subsection{Further Analysis of $\dpfixtau$ and \dpadaptovetau}
\label{sec:exp_further_analysis}
\textbf{Comparison between thresholding approaches in our two methods.} The two methods have different performance as discussed above, and the difference is between the constant thresholding and the DP-released adaptive thresholding.
To quantify this effect, Table~\ref{tab:precision_of_retrival} reports the precision under both \textit{constant thresholds} (in \dpfixtau) and \textit{adaptive thresholds} (in \dpadaptovetau), where we measure the percentage of truly top-50 documents among the retrieved documents for each question and calculate the average over questions as the precision. We observe that precision under \dpfixtau\ is particularly low for the correlated question set \textit{MQuAKE Questions}, whereas \dpadaptovetau\ significantly improves retrieval precision on these datasets through its adaptive thresholds. This improvement in retrieval quality directly contributes to the superior performance of \dpadaptovetau\ in the setting of \textit{correlated question set}.

\begin{table}
\centering
\caption{Precision of retrieved documents under different thresholding approaches, measured as the percentage of truly top-50 relevant documents among the retrieved. 
}
\label{tab:precision_of_retrival}
\vspace{0.4em}
\resizebox{\textwidth}{!}{%
	\begin{tabular}{c|cccc}
	\toprule
		& \multicolumn{2}{c}{\textbf{Independent Question Set}} & \textbf{Correlated Question Set} \\
		& Natural Questions & Trivia Questions & MQuAKE Questions \\
		\midrule
		Constant Thresholding (in \dpfixtau) & 78.8\%  & 72.2\%  & 17.6\% \\
		Adaptive Thresholding (in \dpadaptovetau{}) & 92.6\%  & 94.6\%  & 40.7\% \\
            Adaptive Thresholding (Non-private top-K-release) & 99.4\%  & 99.6\%  & 43.5\% \\
		\bottomrule
	\end{tabular}
}
\end{table}

\textbf{Effect of different $M$ in the individual privacy accounting framework.} 
Both of our proposed methods include a hyperparameter $M$, which controls the maximum number of queries for which an individual document's privacy budget can be consumed. In our main results (Figure~\ref{fig:main}), we set $M = 1$ to ensure strict per-document privacy usage. However, this setting may limit utility: once a document is used for one query, it becomes unavailable for future queries, even if it would have been highly relevant. 
To better understand the impact of $M$, we evaluate our two methods with a larger value of $M = 5$.
The left plot in Figure~\ref{fig:M_study} shows a substantial increase in Top-50 retrieval precision when using $M=5$, indicating better access to relevant documents. This improvement translates into higher end-to-end RAG utility, as shown in the three plots on the right. However, increasing $M$ also leads to a higher total privacy cost ($\varepsilon_{\rm total}= M\cdot \varepsilon_{q}$). 

\begin{figure}[!t]
\centering
\begin{subfigure}[t]{0.24\textwidth}
    \includegraphics[width=\linewidth]{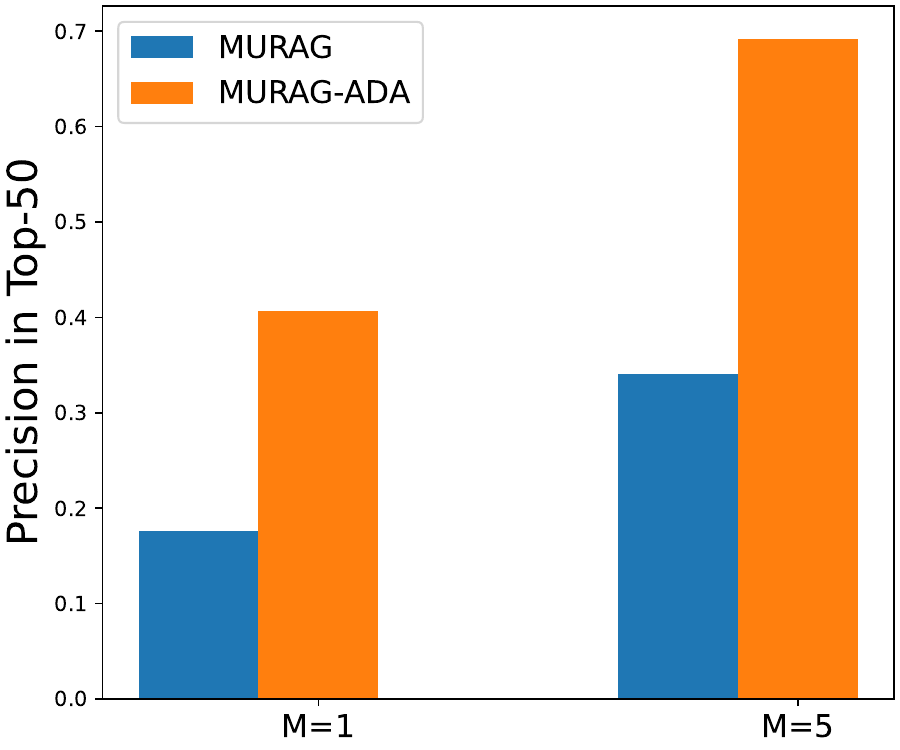}
\end{subfigure}
\hfill
\begin{subfigure}[t]{0.24\textwidth}
    \includegraphics[width=\linewidth]{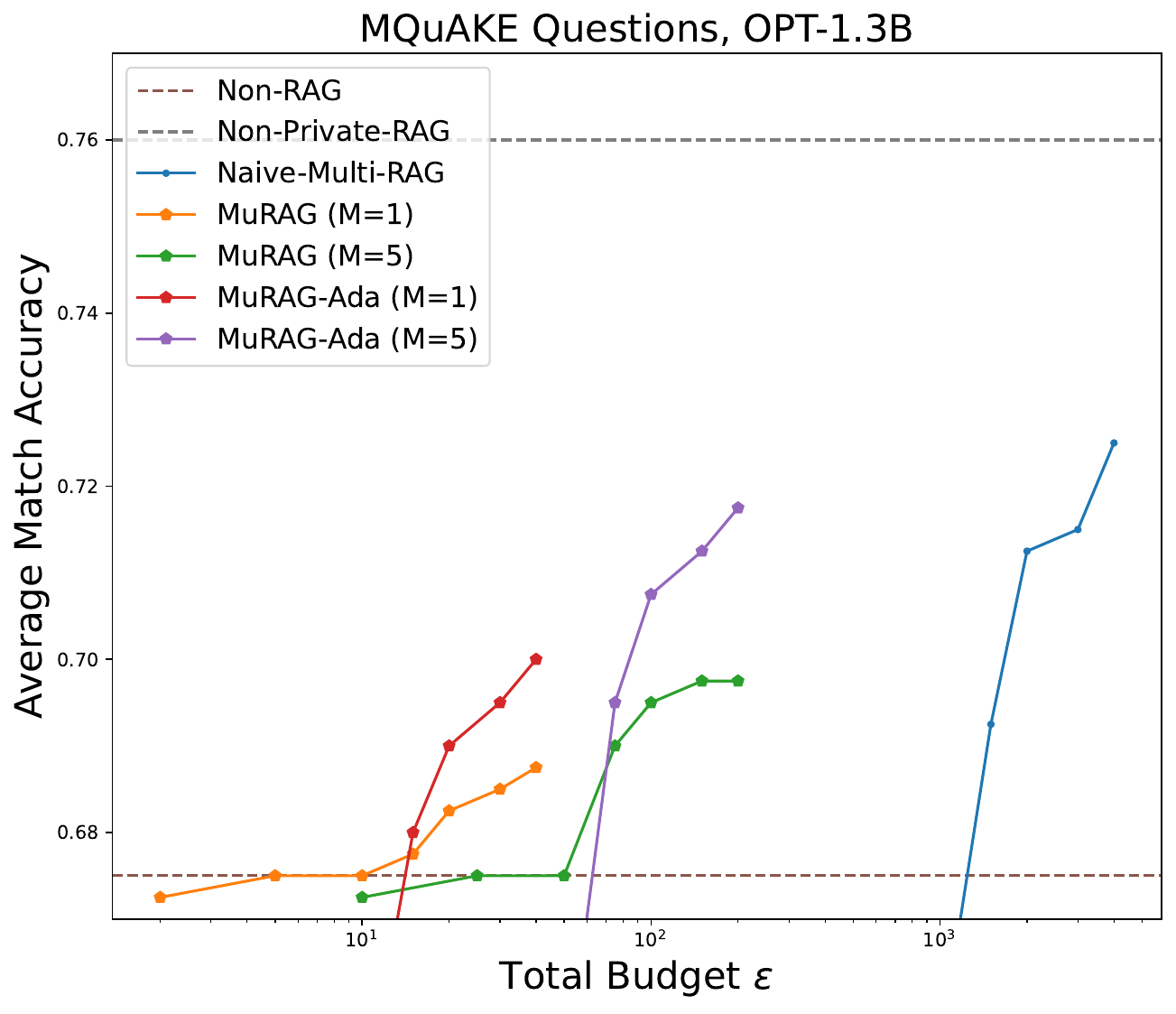}
    \end{subfigure}
\hfill
\begin{subfigure}[t]{0.24\textwidth}
    \includegraphics[width=\linewidth]{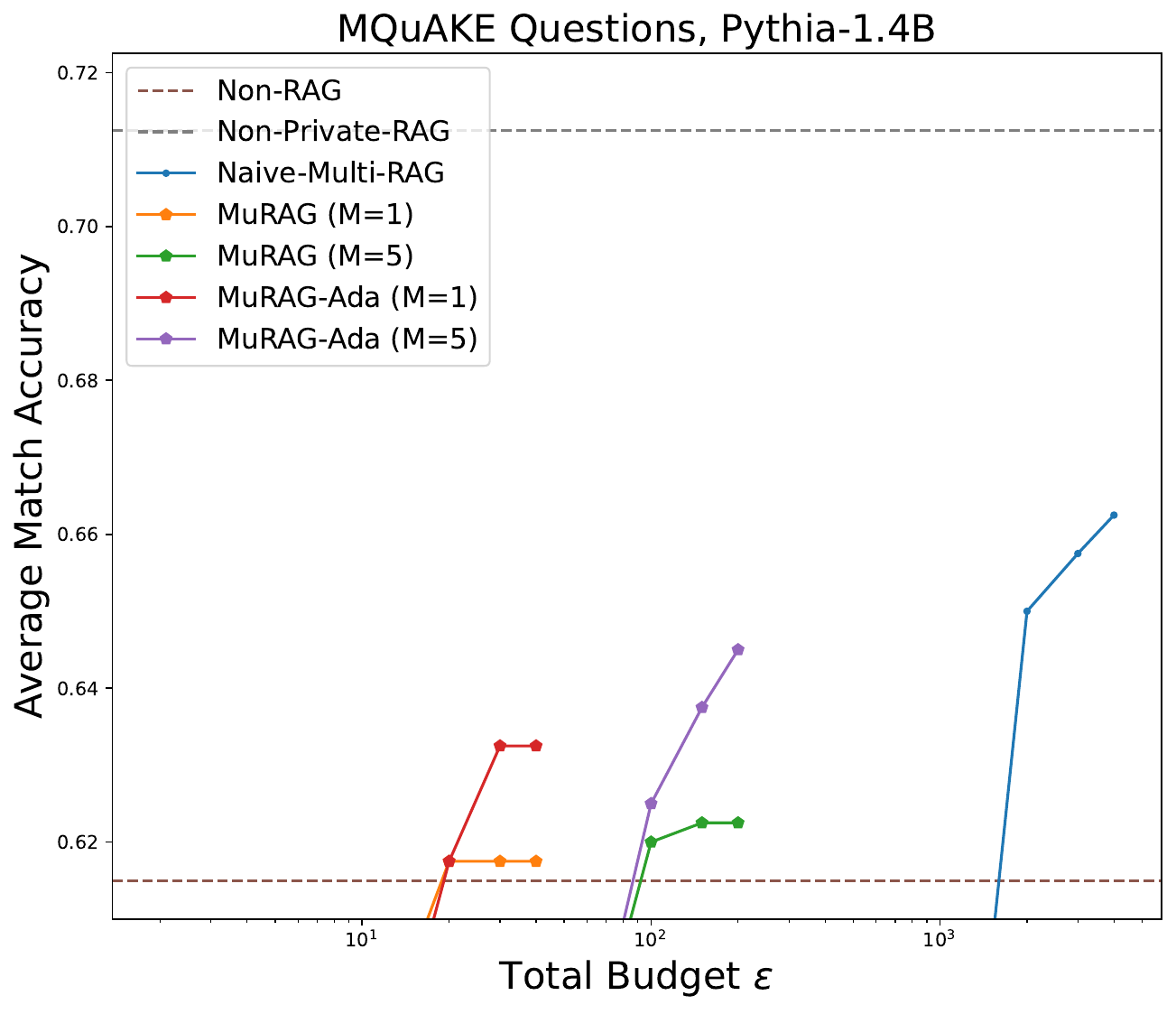}
\end{subfigure}
\hfill
\begin{subfigure}[t]{0.24\textwidth}
    \includegraphics[width=\linewidth]{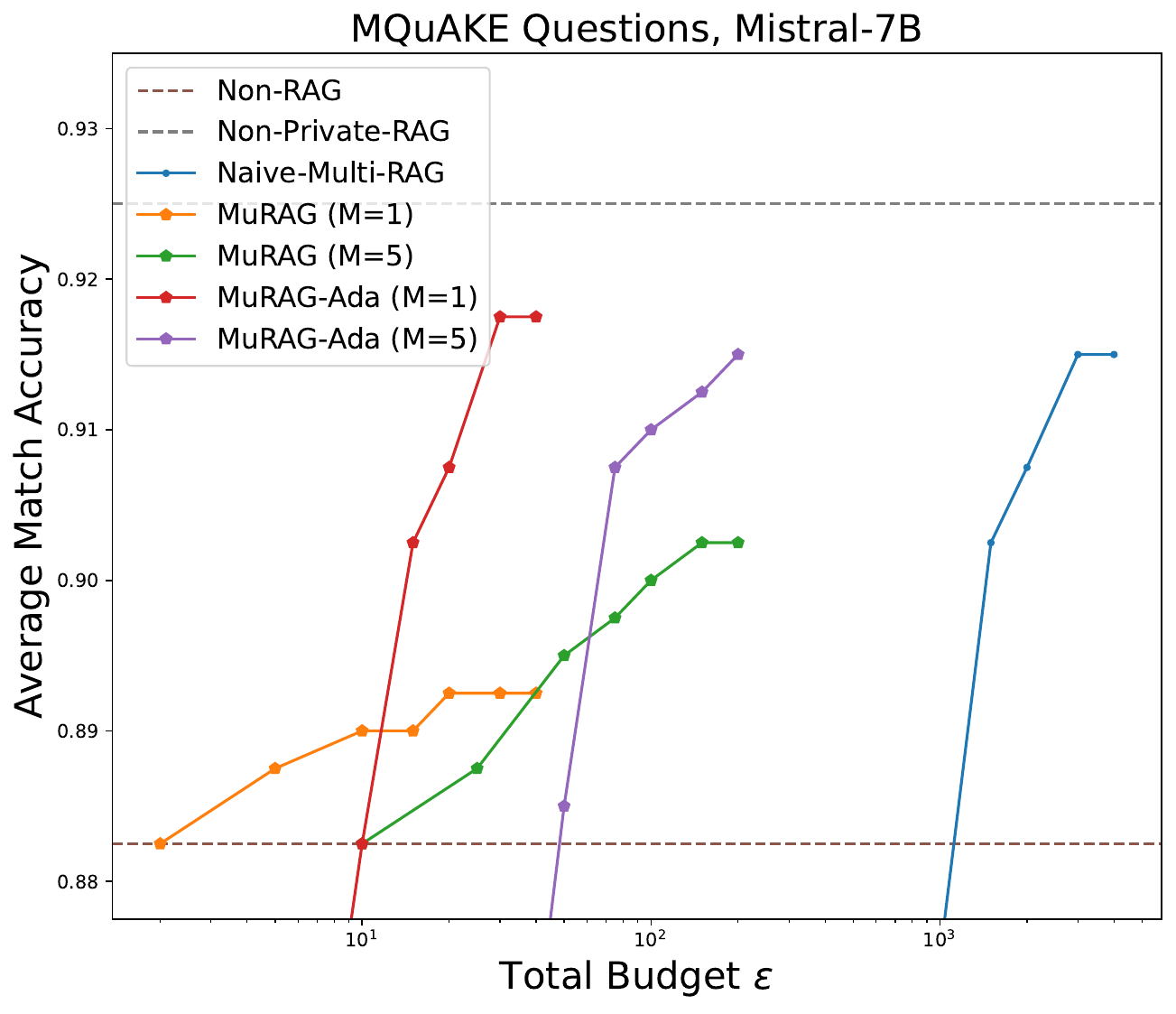}
\end{subfigure}
\captionsetup{font=small}
\caption{Comparison of $M=1$ and $M=5$ in the individual privacy accounting framework. The left plot shows the retrieval precisions of two methods with $M=1,5$. Right three plots show the trade-off between the QA performance and the $\varepsilon_{\rm total}$ in DP.
}
\label{fig:M_study}
\end{figure}

\begin{figure}[t]
\centering
\begin{minipage}{0.33\linewidth}
    \centering
    \includegraphics[width=\linewidth]{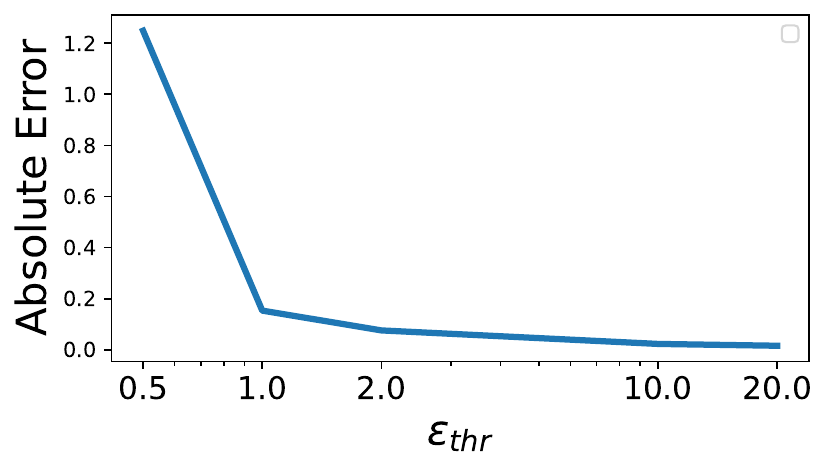}
    \caption{Absolute error of releasing $\tau_t$ in \dpadaptovetau.}
    \label{fig:tau_release}
\end{minipage}\hfill
\begin{minipage}{0.63\linewidth}
\textbf{Budget allocation in \dpadaptovetau.} An important hyperparameter, $\varepsilon_{\mathrm{thr}}$, controls the privacy budget allocated for releasing the threshold $\tau_t$.
Figure~\ref{fig:tau_release} shows the absolute error between the true top-$K$ threshold and the estimated threshold returned by the DP threshold-release procedure (Lines 3–10 in Algorithm~\ref{alg:batch_dp_rag-v2}) on the \textit{Trivia Questions} dataset. As shown, the estimation error remains small (absolute error $\leq 0.2$) when $\varepsilon_{\mathrm{thr}} \geq 1.0$, which is quite reasonable given that most scores lie between 70 and 100.
Based on this trade-off, we choose $\varepsilon_{\mathrm{thr}}$ so that it achieves a small absolute error while consuming only a small fraction of the total budget $\varepsilon$, leaving the remaining budget for the DP-RAG token-generation steps.

\end{minipage}
\end{figure}

\BlankLine\BlankLine
\section{Related Work}

Recent studies identify two main privacy risks in retrieval-augmented generation (RAG) systems. 
The first is membership inference attacks (MIA)~\citep{shokri2017membership}, which test whether a specific document is in the private external dataset, often via adversarial prompts~\citep{naseh2025riddle,liu2025mask,anderson2024my} or scoring mechanisms~\citep{li2025generating}. 
The second is data reconstruction attacks, which aim to recover document content using adversarial prompts~\citep{zhang2025deal,zeng2024good,jiang2024rag} or poisoning triggers~\citep{peng2024data}. 
Together, these works highlight the growing need for principled privacy-preserving algorithms for RAG.

Several DP-based defenses have been proposed. 
\citet{koga2024privacy} introduced a single-query DP-RAG system, and others~\citep{yao2025private,grislain2025rag} studied DP release of document identifiers. 
However, none of these methods address the realistic multi-query setting. 
In addition to DP based methods, empirical defenses have also been explored, including paraphrasing retrieved documents~\citep{yao2025private} and dataset privatization~\citep{zeng2024mitigating}, but these lack formal privacy guarantees and remain vulnerable to strong adversarial attacks. 
A complementary line of work considers protecting user queries in cloud-hosted RAG~\citep{cheng2024remoterag}, which addresses a different threat model than ours. 

For additional related work on the use of differential privacy in large language models and the line of individual privacy accounting, we refer readers to Appendix~\ref{apx:ext_relatedwork}.

\section{Discussion}
\label{sec:discuss}
\textbf{Why Privacy Filter rather than Amplification by Subsampling?}\quad As surveyed in Section~\ref{sec:other_dp_in_llm}, privacy amplification by subsampling \citep{balle2018privacy, wang2019subsampled, pmlrv97zhu19c} is widely used in DP LLM applications, such as DP prompt tuning and DP in-context learning, to enhance generation quality. However, this technique is not well-suited for DP RAG as shown in the experiment section. We would like to discuss the reason behind:
\begin{itemize}[leftmargin=1em]
\item In prompt tuning, the goal is to learn a single task-specific prompt that can generalize to all future queries. In DP in-context learning, a small number of example inputs are selected under DP constraints and reused across queries. In contrast, RAG does not allow for such "unified" prompts or examples: each test-time query requires retrieving and using query-specific documents, which must be handled privately, which makes individual privacy filter a more suitable choice.
\item Moreover, in prompt tuning and in-context learning, all data points in the private dataset can meaningfully contribute to the learned prompt or selected example set. This property enables the use of subsampling-based amplification techniques in algorithm design. In RAG, however, only a sparse subset of documents in the large external corpus are relevant to any given query—most documents provide no utility.
\end{itemize}
These two key differences, the lack of reusable prompts and the sparsity of useful data, motivate the development of our new DP RAG algorithms using R\`enyi filter rather than amplification by sampling.

\textbf{Leveraging Historical QA. }\quad As shown in Table~\ref{tab:precision_of_retrival} and Figure~\ref{fig:hist}, when the relevant documents for different questions exhibit significant overlap, the quality of answers to later questions degrades. This occurs because the documents required to answer the queries may exhaust their privacy budgets and are subsequently filtered out from the active set passed to the RAG algorithm. In the extreme case where a user repeatedly submits the same query, only the first response may retain high quality, while subsequent answers degrade due to the unavailability of relevant documents.

A potential remedy is to reuse historical answers as auxiliary documents in future queries. This can be done without incurring any additional privacy cost, owing to the post-processing property of differential privacy.

\section{Conclusion}\label{sec:conclusion}
We proposed the first differentially private (DP) framework for retrieval-augmented generation (RAG) that supports answering multiple queries while protecting a sensitive external dataset. 
We introduced two algorithms: \dpfixtau\ and \dpadaptovetau\ differ in how they select documents for each query under DP guarantees, which have their advantage for different types of question set.
Through comprehensive experiments on various question datasets and three LLMs, we demonstrated that our methods achieve the utility that outperforms a Non-RAG baseline for answering $100$ questions under a realistic budget of $\varepsilon=10$. We also showed that \dpadaptovetau\ performs particularly well on correlated question sets. 
We hope our contributions provide a foundation for more practical and principled privacy-preserving RAG systems.

\bibliography{iclr2026_conference}
\bibliographystyle{iclr2026_conference}

\newpage

\appendix

\section{Extended Related Work}\label{apx:ext_relatedwork}

\subsection*{Differential Privacy in Large Language Models}\label{sec:other_dp_in_llm}

Beyond our focus on DP for RAG, differential privacy has also been explored in a variety of LLM settings, including pre-training and fine-tuning~\citep{charles2024fine,yu2021differentially,li2021large}, prompt tuning~\citep{duan2023flocks,hong2024dpopt}, and in-context learning~\citep{tang2024privacypreserving,wu2024privacypreserving}. 
These tasks differ structurally and thus require different DP mechanisms. 
In pre-training and fine-tuning, the challenge lies in optimizing model parameters while maintaining stability under DP noise, whereas in RAG, the emphasis is on protecting privacy during inference-time retrieval and generation.  Closer to our setting are DP methods for prompt tuning and in-context learning. 
Still, the structural differences between these tasks and RAG lead to distinct algorithmic requirements (see Section~\ref{sec:discuss} for discussion).  Another line of research investigates differentially private synthetic test generation under varying levels of model access. 
\citet{vinod2025invisibleink, amin2025clustering, amin2024private} focus on next-token prediction with logits access, while \citet{xie2024differentially} studies the API-access setting, which we also include in our comparisons.

\subsection*{Individual Privacy Accounting and Privacy Filters}
Individual privacy accounting tracks the privacy loss of a single data point, often yielding tighter bounds than worst-case analyses over all neighboring datasets \citep{dmns06}. 
This perspective was introduced by \citet{feldman2021individual} in the context of Rényi Differential Privacy and later extended to Gaussian Differential Privacy by \citet{koskela2022individual}. 
See \citet[Section~1.2]{feldman2021individual} for a detailed overview. Within this framework, privacy filters provide a general mechanism for adaptively enforcing privacy constraints by halting an algorithm once the cumulative privacy loss reaches a budget. Individual privacy filters \citep{feldman2021individual, koskela2022individual} refine this idea by operating at the granularity of single data points, excluding them from further computation once their budgets are exhausted. For additional developments and extensions, see \citet{rogers2016privacy, feldman2021individual, koskela2022individual, smith2022fully, whitehouse2023fully}.

\section{Discussion of Sparsity in RAG}\label{apx:sparse}
In retrieval-augmented generation (RAG), relevance is \emph{inherently sparse}: for any given query, only a small subset of the external corpus contains the necessary information, while the vast majority are irrelevant. We illustrate this sparsity with representative examples from the four datasets used in this paper, as shown in Table~\ref{tab:question_example}. For instance, in Natural Questions, the query ``what is the story behind \emph{Five Nights at Freddy's}?'' is mainly supported by the corresponding Wikipedia article.

\begin{table}[ht]
  \centering
\caption{Example questions drawn from official sources: Natural Questions (\href{https://ai.google.com/research/NaturalQuestions/visualization}{visualization page}); TriviaQA (\href{https://nlp.cs.washington.edu/triviaqa/sample.html}{example page}); MQuAKE (\href{https://github.com/princeton-nlp/MQuAKE}{GitHub repository}); and ChatDoctor (\href{https://huggingface.co/zl111/ChatDoctor}{Hugging Face page}).}
  \label{tab:question_example}
\resizebox{\textwidth}{!}{
  \begin{tabular}{l | l} 
    \toprule
    Dataset  & Example Question \\
    \midrule
    Natural Questions & what is the story behind 5 nights at freddy's  \\
    \midrule
    TriviaQA     &  Miami Beach in Florida borders which ocean?	   \\
    \midrule
    MQuAKE        & \makecell[l]{What country is the birthplace of the sport associated with Hampshire Cricket Board? \\ Where was the sport associated with Hampshire Cricket Board originated? Which \\ country is credited with creating the sport associated with Hampshire Cricket Board?}  \\
    \midrule
    ChatDoctor   & \makecell[l]{"instruction": "If you are a doctor, please answer the medical questions based on \\ the patient's description." "input": "Doctor, I think I've been poisoned. I drank \\ some ethylene glycol by mistake. "}
   \\
    \bottomrule
  \end{tabular}
}
\end{table}

\newpage
\section{Supplementary Algorithms}\label{apx:old_rag}
This section contains additional algorithms that were excluded from the main body of the paper for space reasons.
\subsection{Auxiliary Algorithms}
\textbf{(Top-K selection)} Algorithm~\ref{alg:topk} selects the top-$K$ documents from the dataset $D$ according to the score function $r$. If $|D| < K$, it pads the output with empty strings so that the result always contains exactly $K$ elements, as required for the privacy accounting (see Lemma~\ref{lem:privacy_batch_rag_ada}).
\begin{algorithm}[H]
\caption{$\topk(D,K,r)$}\label{alg:topk}
\setcounter{AlgoLine}{0}
\KwIn{dataset $D$, sample size $K$, score function $r$}

\eIf{$|D| \geq K$}{
    $D^\prime \leftarrow$ top-$K$ documents from $D$ ranked by $r$ \tcp*[r]{assume no ties}
}{
    $D^\prime \leftarrow D \cup \{\texttt{""}\}^{\,K - |D|}$ \tcp*[r]{pad with empty strings to size $K$}
}
\KwRet{$D^\prime$}
\end{algorithm}

\textbf{(Poisson Subsampling)}  Algorithm~\ref{alg:poisson-subsample-dp} implements Poisson subsampling: it independently includes each data point $z_i \in D$ in the subsample $S$ with probability $\gamma$, resulting in a (random) subset whose expected size is $\gamma n$.
\begin{algorithm}[H]
\caption{\poissionsampling $(D, \gamma)$}
\label{alg:poisson-subsample-dp}
\SetKwInOut{Input}{Input}\SetKwInOut{Output}{Output}
\DontPrintSemicolon
\Input{Dataset $D=\{z_1,\dots,z_n\}$, sampling rate $\gamma \in(0,1)$}
$S \leftarrow \emptyset$\tcp*[r]{Initialize subsample}
\For{$i \gets 1$ \KwTo $n$}{
    Draw $b_i \sim \mathrm{Bernoulli}(\gamma)$\;
    \If{$b_i = 1$}{
        $S \leftarrow S \cup \{z_i\}$\;
    }
}
\KwRet $S$
\end{algorithm}

\textbf{(Token Counting)} Algorithm~\ref{alg:counting} computes the token count vector over a fixed vocabulary: given a (multi)set of tokens $S$ and vocabulary $\mathcal{V}$, it iterates over each vocabulary item $v_j$ and counts how many times $v_j$ appears in $S$, returning the resulting count vector $\vec{u} \in \mathbb{N}^{|\mathcal{V}|}$.
\begin{algorithm}[H]
\caption{\algocount$(S, \mathcal{V})$}
\label{alg:counting}
\setcounter{AlgoLine}{0}
\KwIn{
  A (multi)set of tokens $S \in \mathcal{V}^*$, a vocabulary $\mathcal{V} = \{v_1, v_2, \ldots, v_{|\mathcal{V}|}\}$.
}
\KwOut{
  Count vector $\vec{u} \in \mathbb{N}^{|\mathcal{V}|}$, where $u_j$ is the number of times $v_j$ appears in $S$.
}

\For{$j \in \{1,2,\ldots,|\mathcal{V}|\}$}{
  $u_j \leftarrow \sum_{x \in S} \mathbf{1}\{x = v_j\}$\;
}

\KwRet $\vec{u}$\;

\end{algorithm}

\textbf{(Exponential Mechanism)} Algorithm~\ref{alg:expmech} implements the exponential mechanism: given a candidate set $\mathcal{V}$ and utility scores ${u_j}$ with sensitivity $\Delta u$, it assigns each candidate $v_j$ an unnormalized weight $\exp\bigl(\frac{\varepsilon u_j}{2\Delta u}\bigr)$, normalizes these to probabilities, and then samples an output $v_J$ from the resulting categorical distribution, ensuring $\varepsilon$-DP.
\begin{algorithm}[ht]
\caption{\expomech$(\vec{u}, \mathcal{V}, \varepsilon)$}
\label{alg:expmech}
\setcounter{AlgoLine}{0}
\KwIn{
  Candidate set $\mathcal{V} = \{v_1, v_2, \ldots, v_{|\mathcal{V}|}\}$, privacy parameter $\varepsilon$, utility scores $\vec{u} = (u_1,\ldots,u_{|\mathcal{V}|})$ with $u_j := u(v_j)$
}
\KwOut{A selected element $v \in \mathcal{V}$}

\For{$j \in \{1,2,\ldots,|\mathcal{V}|\}$}{
  $w_j \leftarrow \exp\!\left(\frac{\varepsilon \cdot u_j}{2 \Delta u}\right)$ \tcp*[r]{unnormalized weight}
}

$Z \leftarrow \sum_{j=1}^{|\mathcal{V}|} w_j$ \tcp*[r]{normalizer / partition function}

\For{$j \in \{1,2,\ldots,|\mathcal{V}|\}$}{
  $p_j \leftarrow w_j / Z$ \tcp*[r]{sampling probability for $v_j$}
}

Sample $J \sim \text{Categorical}(p_1,\ldots,p_{|\mathcal{V}|})$\;

\KwRet $v_J$\;

\end{algorithm}

\subsection{Differentially Private RAG for Single-Query Question Answering}

\textbf{(\dprag)} Algorithm~\ref{alg:dp-rag-v2} describes our differentially private RAG procedure for single-question answering: at each decoding step, it compares a baseline token (without retrieval) to votes from $m$ RAG “voters” over disjoint document subsets, uses a noisy threshold test (via Laplace noise) to decide whether retrieval can be used, and when it does, privately selects the next token with the exponential mechanism under a per-token budget $\varepsilon_0$, stopping when either an $\langle\mathtt{EOS}\rangle$ token is generated or the total privacy budget $\varepsilon$ is exhausted. Algorithm~\ref{alg:dp-rag-v2} can be seen as a variant of \citet[Algorithm~2]{koga2024privacy}, where the LimitedDomain mechanism~\citep{durfee2019practical} is replaced by the exponential mechanism in the private token-generation step, yielding a stronger pure-DP guarantee and simplifying the privacy analysis.

\begin{algorithm}[H]
\caption{\dprag$(x, D, \mathrm{LLM}, \varepsilon)$ }
\label{alg:dp-rag-v2}
\setcounter{AlgoLine}{0}
\KwIn{Prompt $x$; document collection $D$; language model $\mathrm{LLM}$; total budget $\varepsilon$.}
\KwRequire{Per-token budget $\varepsilon_0$; max tokens $T_{\max}$; voters $m$; docs per voter $k$; retriever $R$; vote threshold $\theta$.}
\KwSet{$\varepsilon_{\text{Lap}} \leftarrow \varepsilon_{\text{Expo}} \leftarrow \varepsilon_0/2$; discoveries left $c \leftarrow \lfloor \varepsilon/\varepsilon_0 \rfloor$}

\smallskip
$\hat{\theta} \leftarrow \theta + \mathrm{Lap}(2/\varepsilon_{\text{Lap}})$ \tcp*[r]{noisy threshold}
$D_x \leftarrow R(x, D; m k)$; split $D_x$ uniformly into $m$ chunks $\{D_x^{(i)}\}_{i=1}^m$.

\For{$t \leftarrow 1$ \KwTo $T_{\max}$}{
  $b \leftarrow \mathrm{LLM}(x, \emptyset \mid y_{<t})$ \tcp*[r]{baseline token (no RAG)}
  \For{$i \leftarrow 1$ \KwTo $m$}{
    $v_i \leftarrow \mathrm{LLM}(x, D_x^{(i)} \mid y_{<t})$
  }
  $\vec{u} \leftarrow \algocount(\{v_i\}_{i=1}^{m}, \mathcal{V})$; \quad $s \leftarrow H[b]$ \tcp*[r]{Algorithm~\ref{alg:counting}}
  \uIf{$s + \mathrm{Lap}(4/\varepsilon_{\mathrm{Lap}}) \le \hat{\theta}$}{
    $y_t \leftarrow \expomech(\vec{u}, \mathcal{V}, \varepsilon_{\text{Expo}})$ \tcp*[r]{Algorithm~\ref{alg:expmech}}
    $c \leftarrow c - 1$ \\
    $\hat{\theta} \leftarrow \theta + \mathrm{Lap}(2/\varepsilon_{\text{Lap}})$
  }\Else{
    $y_t \leftarrow b$ \tcp*[r]{keep baseline}
  }
  \If{$y_t=\langle\mathtt{EOS}\rangle$ \textbf{or} $c=0$}{\KwRet $(y_1,\ldots,y_t)$}
}
\KwRet $(y_1,\ldots,y_{T_{\max}})$
\end{algorithm}

We now give the privacy guarantee for Algorithm~\ref{alg:dp-rag-v2}.

\begin{lemma}[Privacy Guarantee for Algorithm~\ref{alg:dp-rag-v2}]\label{lem:priv_base_dp_rag}
    Algorithm~\ref{alg:dp-rag-v2} satisfies $\varepsilon$-DP under add/remove relationship.
\end{lemma}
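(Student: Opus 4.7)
The plan is to decompose Algorithm~\ref{alg:dp-rag-v2} into two interleaved DP primitives: (i) an adaptive above-threshold test in the style of the Sparse Vector Technique (SVT), and (ii) private next-token selection via the exponential mechanism. The counter $c = \lfloor \varepsilon/\varepsilon_0\rfloor$ caps the number of ``discoveries'' (i.e., steps in which the exponential mechanism fires), so it suffices to show that each discovery contributes at most $\varepsilon_0 = \varepsilon_{\mathrm{Lap}} + \varepsilon_{\mathrm{Expo}}$ to the accumulated privacy loss. Basic composition of pure DP over at most $c$ discoveries will then yield $c\cdot\varepsilon_0 \leq \varepsilon$.

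First I would establish sensitivities of the $D$-dependent quantities under add/remove. The baseline token $b = \mathrm{LLM}(x, \emptyset \mid y_{<t})$ is computed without $D$ and thus has sensitivity $0$. The retrieval output $D_x = R(x, D; mk)$ changes by at most one document (the added or removed document either enters or leaves the top-$mk$, displacing exactly one other entry), so after splitting $D_x$ into $m$ disjoint chunks only one chunk $D_x^{(i)}$ is affected and at most one voter's output $v_i$ changes. Consequently, the histogram $\vec u = \algocount(\{v_i\}, \mathcal V)$ has $\ell_\infty$-sensitivity at most $1$, and the scalar $s = \vec u[b]$ used in the noisy threshold test also has sensitivity $1$. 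These match the noise scales hard-coded in the algorithm.

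Next I would handle the SVT portion. Because $\hat\theta$ is resampled immediately after each discovery, the execution decomposes into at most $c$ independent single-positive SVT instances run in series, separated by exponential-mechanism steps. Each single-positive SVT instance, with noise $\mathrm{Lap}(2/\varepsilon_{\mathrm{Lap}})$ on the threshold and $\mathrm{Lap}(4/\varepsilon_{\mathrm{Lap}})$ on each sensitivity-$1$ query, satisfies $\varepsilon_{\mathrm{Lap}}$-DP by the classical SVT analysis; crucially, the arbitrarily many ``below threshold'' outcomes emitted before the discovery come for free in that guarantee and need not be composed individually. The subsequent $\expomech$ call uses utility scores $\vec u$ with $\ell_\infty$-sensitivity $1$ and budget $\varepsilon_{\mathrm{Expo}}$, and is therefore $\varepsilon_{\mathrm{Expo}}$-DP.

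Finally, I would combine the pieces by adaptive composition for pure DP. The overall view is a sequence of at most $c$ stages, each consisting of a single-positive SVT ($\varepsilon_{\mathrm{Lap}}$-DP) followed by one exponential mechanism call ($\varepsilon_{\mathrm{Expo}}$-DP), with all stage outputs fed back into the state $y_{<t}$ as post-processing of already-released quantities. Basic composition gives at most $c(\varepsilon_{\mathrm{Lap}} + \varepsilon_{\mathrm{Expo}}) = c\varepsilon_0 \leq \varepsilon$. The main subtlety I expect is in cleanly decomposing the adaptively interleaved stream of SVT comparisons and exponential-mechanism outputs into independent single-positive SVT segments so that the standard SVT lemma applies verbatim; once that decomposition is made precise (using the fact that the state $y_{<t}$ and the remaining counter $c$ are measurable with respect to previously released outputs, and that fresh Laplace noise is drawn after each discovery), the accumulated accounting is routine.
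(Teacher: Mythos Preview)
Your proposal is correct and mirrors the paper's own proof: both recognize Algorithm~\ref{alg:dp-rag-v2} as an instance of AboveThreshold/SVT with at most $c=\lfloor\varepsilon/\varepsilon_0\rfloor$ discoveries, establish sensitivity~$1$ for the vote-count queries (the paper phrases the intermediate step as $|D_x\triangle D_x'|\le 2$ rather than your ``one document displaces another,'' but both arrive at one voter changing), and conclude via the single-positive SVT guarantee plus $\varepsilon_{\mathrm{Expo}}$ for the exponential mechanism, composed $c$ times. The paper simply cites \citet[Theorem~3.23]{dwork2014algorithmic} and adaptive composition where you spell out the segment-by-segment decomposition, but the argument is the same.
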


\begin{proof}
Notice that Algorithm~\ref{alg:dp-rag-v2} is an instantiation of AboveThreshold (\citet[Algorithm~1]{dwork2014algorithmic}) with at most $c$ discoveries. It therefore suffices to show that each discovery event (i.e., each use of the exponential mechanism) satisfies $\varepsilon_0$-DP, where $c = \lfloor \varepsilon / \varepsilon_0 \rfloor$.

We first verify that the added noise meets the requirements of the stated privacy guarantee, namely for the threshold perturbation (Line~8 of Algorithm~\ref{alg:dp-rag-v2}) and for the exponential mechanism (Line~9 of Algorithm~\ref{alg:dp-rag-v2}). Without loss of generality, assume the input document set has size larger than $mk$. Consider two neighboring datasets $D$ and $D'$ such that $\lvert D \setminus D' \rvert + \lvert D' \setminus D \rvert \le 1$. This implies $\lvert D_x \setminus D_x' \rvert + \lvert D_x' \setminus D_x \rvert \le 2$, since the retriever $R$ ranks documents by relevance and selects the top-$mk$ entries. Replacing a single token in the voting results can change at most one bin count in the histogram by $1$, so the score function satisfies
\[
  \lvert s(D) - s(D') \rvert \;=\; \lvert s(D_x) - s(D_x') \rvert \;\le\; 1,
\]
where $s$ is defined in Line~7 of Algorithm~\ref{alg:dp-rag-v2}. Similarly, the utility function has unit sensitivity for each token, i.e.,
\[
  \lvert u_j(D_x) - u_j(D_x') \rvert \le 1,\quad \forall j \in \{1,\ldots,|\mathcal{V}|\},
\]
where $u_j$ is the $j$-th coordinate of $\vec{u}$.

Thus, by \citet[Theorem~3.23]{dwork2014algorithmic} and adaptive composition, each discovery is $\varepsilon_0$-DP. Since the number of discoveries satisfies $c = \lfloor \varepsilon / \varepsilon_0 \rfloor$, basic composition implies that the entire execution of Algorithm~\ref{alg:dp-rag-v2} satisfies $\varepsilon$-DP.
\end{proof}

\subsection{Baseline Algorithms for Differentially Private Multi-Query RAG}

\textbf{(\naivedprag)} Algorithm~\ref{alg:naive_batch_dp_rag} defines a naïve baseline for DP multi-query RAG: it answers each query $q_t$ independently by invoking the single-query DP-RAG procedure (Algorithm~\ref{alg:dp-rag-v2}) on the private dataset $D$ with per-query budget $\varepsilon_q$, yielding responses $\{a_t\}_{t=1}^T$.

\begin{algorithm}[H]
\caption{\naivedprag}
\label{alg:naive_batch_dp_rag}
\setcounter{AlgoLine}{0}
\KwIn{Private external dataset $D$, query sequence $\{q_1, q_2, \ldots, q_T\}$, per-query budget $\varepsilon_q$}

\For{$t = 1, \ldots, T$}{
    $a_t \leftarrow \dprag(q_t, D, \llm, \varepsilon_q)$ \tcp*[r]{Apply Algorithm~\ref{alg:dp-rag-v2}}
}

\KwRet{$(a_1, a_2, \ldots, a_T)$}
\end{algorithm}

\begin{lemma}[Privacy guarantee of Algorithm~\ref{alg:naive_batch_dp_rag}]
    Algorithm~\ref{alg:naive_batch_dp_rag} satisfies $T\varepsilon_q$-DP under add/remove relationship.
\end{lemma}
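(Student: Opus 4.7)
The plan is to reduce the claim to a single invocation of the basic sequential composition theorem for differential privacy, since \naivedprag\ is literally a sequence of $T$ independent calls to \dprag\ on the same private corpus $D$ with fixed per-call budget $\varepsilon_q$.

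First I would fix two neighboring datasets $D, D'$ (differing by at most one document under add/remove) and view the overall output of Algorithm~\ref{alg:naive_batch_dp_rag} as the tuple $(a_1,\dots,a_T)$. The queries $\{q_1,\dots,q_T\}$ are inputs to the algorithm but are independent of the private corpus, so from the privacy-analysis standpoint they can be treated as public side information. The $t$-th release is $a_t = \dprag(q_t, D, \llm, \varepsilon_q)$, which by Lemma~\ref{lem:priv_base_dp_rag} satisfies $\varepsilon_q$-DP with respect to $D$ under the add/remove relation.

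Next I would appeal to (adaptive) sequential composition of pure DP: if $\mathcal{M}_1,\dots,\mathcal{M}_T$ are each $\varepsilon_q$-DP in their private argument, then the joint mechanism $(\mathcal{M}_1(D),\dots,\mathcal{M}_T(D))$ is $T\varepsilon_q$-DP, even when each $\mathcal{M}_t$ may adaptively depend on previous outputs $a_1,\dots,a_{t-1}$ (the queries themselves may be chosen based on earlier answers, which is precisely the scenario adaptive composition covers). The claim then follows immediately.

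I do not anticipate a real obstacle here; the only minor care-points are (i) confirming that the retrieval step inside each \dprag\ call is captured by the $\varepsilon_q$-DP guarantee of Lemma~\ref{lem:priv_base_dp_rag} (it is, since the retriever operates on $D$ inside \dprag\ and the lemma's analysis already accounts for sensitivity of the retrieved set), and (ii) noting that no privacy budget is consumed outside the $T$ calls, so no extra composition terms arise. Thus the overall guarantee is exactly $T\varepsilon_q$-DP under add/remove neighbors, matching the statement.
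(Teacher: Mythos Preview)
Your proposal is correct and matches the paper's own proof essentially line for line: invoke Lemma~\ref{lem:priv_base_dp_rag} to get that each call to \dprag\ is $\varepsilon_q$-DP, then apply basic (sequential) composition over the $T$ calls to obtain $T\varepsilon_q$-DP. The additional care-points you raise (adaptive dependence of queries on prior answers, retrieval being internal to \dprag) are valid but not needed beyond what the paper's two-sentence proof already assumes.
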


\begin{proof}
By Lemma~\ref{lem:priv_base_dp_rag}, every call to \dprag{} satisfies $\varepsilon_q$-DP. Applying basic composition~\citep{dwork2014algorithmic} to $T$ such calls introduces an extra factor of $T$ in the privacy bound.
\end{proof}

\textbf{(\sampledprag)} Algorithm~\ref{alg:naive_batch_dp_rag_subsample} defines a baseline for DP multi-query RAG using subsampling: for each query $q_t$, it first applies Poisson subsampling to the private dataset $D$ with rate $\gamma$ to obtain $D_t$, then runs the single-query DP-RAG procedure (Algorithm~\ref{alg:dp-rag-v2}) on $(q_t, D_t)$ with per-query budget $\varepsilon_q$, producing answers $(a_1,\ldots,a_T)$.

\begin{algorithm}[H]
\caption{\sampledprag}\label{alg:naive_batch_dp_rag_subsample}
\setcounter{AlgoLine}{0}
\KwIn{Private external dataset $D$, query sequence $\{q_1, q_2, \ldots, q_T\}$, 
      per-query privacy budget $\varepsilon_q$, Poisson sampling rate $\gamma$}
\For{$t = 1, \ldots, T$}{
    $D_t \leftarrow \poissionsampling(D, \gamma)$ \tcp*[r]{Apply Algorithm~\ref{alg:poisson-subsample-dp}}
    $a_t \leftarrow \dprag(q_t, D_t , \llm, \varepsilon_q)$ \tcp*[r]{Apply Algorithm~\ref{alg:dp-rag-v2}}
}

\KwRet{$(a_1, a_2, \ldots, a_T)$}
\end{algorithm}

\begin{lemma}
    Algorithm~\ref{alg:naive_batch_dp_rag_subsample} satisfies $T\times\log(1 + \gamma (e^{\varepsilon_q }-1))$-DP under add/remove neigh
\end{lemma}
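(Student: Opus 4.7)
The plan is to combine the single-query guarantee of Lemma~\ref{lem:priv_base_dp_rag} with a standard privacy-amplification-by-Poisson-subsampling result and then close with basic composition over the $T$ queries. The only substantive work is to justify the per-query amplified budget; the cross-query composition is routine.

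First, fix any query index $t$ and view the composite mechanism $\mathcal{A}_t(D) := \dprag(q_t, \poissionsampling(D,\gamma), \llm, \varepsilon_q)$ as a function of the private corpus $D$. By Lemma~\ref{lem:priv_base_dp_rag}, $\dprag(q_t,\cdot,\llm,\varepsilon_q)$ is $\varepsilon_q$-DP under add/remove. Now take two datasets $D, D'$ that differ by the addition/removal of one document $z^\star$; under Poisson sampling, the inclusion of $z^\star$ is an independent Bernoulli$(\gamma)$ coin, so conditioned on $z^\star \notin S$ the distribution of $S$ under $D$ and under $D'$ is identical. This is exactly the setup for the pure-DP subsampling amplification theorem for Poisson sampling under add/remove (see, e.g., \citet{balle2018privacy}), which states that an $\varepsilon$-DP mechanism composed with Poisson$(\gamma)$ subsampling is $\log(1+\gamma(e^{\varepsilon}-1))$-DP. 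Applying this yields that each $\mathcal{A}_t$ is $\log(1+\gamma(e^{\varepsilon_q}-1))$-DP.

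Second, the full algorithm is the adaptive composition $\mathcal{A}_1, \mathcal{A}_2, \ldots, \mathcal{A}_T$, where each $\mathcal{A}_t$ accesses $D$ independently (only the query $q_t$ changes across iterations, and the queries are public inputs). By basic sequential composition for pure DP \citep{dwork2014algorithmic}, the overall guarantee is the sum of the per-round guarantees, i.e., $T \cdot \log(1+\gamma(e^{\varepsilon_q}-1))$-DP under add/remove, which is exactly the claim.

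The main subtle point, and where I would spend the most care, is verifying that the amplification lemma applies in precisely the neighboring-relation regime used in the paper. Specifically, one must ensure (i) that \dprag{} is $\varepsilon_q$-DP under add/remove rather than under replacement (which is given by Lemma~\ref{lem:priv_base_dp_rag}), and (ii) that the amplification statement $\varepsilon \mapsto \log(1+\gamma(e^\varepsilon-1))$ is the add/remove-compatible version, not the replace-one version (which would give a factor of $2\gamma$ inside the exponent). Everything else, including the independence of the $T$ Poisson draws and the fact that the queries $q_t$ are data-independent, is immediate from the algorithm specification.
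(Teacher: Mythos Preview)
Your proposal is correct and follows exactly the same approach as the paper's proof: invoke Lemma~\ref{lem:priv_base_dp_rag} for the per-query $\varepsilon_q$-DP guarantee, apply the Poisson subsampling amplification result from \citet{balle2018privacy} to obtain $\log(1+\gamma(e^{\varepsilon_q}-1))$-DP per query, and finish with basic composition over the $T$ queries. Your additional care about matching the add/remove neighboring relation is welcome but goes beyond what the paper itself spells out.
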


\begin{proof}
    Since each call of the DP-RAG satisfies $\varepsilon_q$-DP, by \citet[Theorem~8]{balle2018privacy}, the Poisson subsampled DP-RAG satisfies $\log(1 + \gamma (e^{\varepsilon_q}-1))$-DP. Applying basic composition~\citep{dwork2014algorithmic} to $T$ such calls introduces an extra factor of $T$ in the privacy bound.
\end{proof}

\section{Privacy Guarantee of Differentially Private Multi-Query RAG Algorithms}
\subsection*{Privacy Guarantee for Algorithm~\ref{alg:batch_dp_rag-v2}}\label{apx:privacy_proof}
\begin{thm*}[Restatement of Theorem~\ref{lem:privacy_batch_rag_ada}]
    $\dpadaptovetau$ (Algorithm~\ref{alg:batch_dp_rag-v2}) satisfies $\varepsilon$-differential privacy under the add/remove neighboring relation, provided that the ex-ante individual privacy budget of every $z \in D$ is at most $\varepsilon$.
\end{thm*}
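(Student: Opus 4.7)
The plan is to prove Theorem~\ref{lem:privacy_batch_rag_ada} via the individual privacy filter framework of \citet{feldman2021individual}, specialized to the order $\alpha = \infty$ (equivalently, pure $\varepsilon$-DP). The per-document budget variable $\mathcal{E}(z)$ in Algorithm~\ref{alg:batch_dp_rag-v2} will serve as the individual filter: each atomic mechanism in the algorithm decrements $\mathcal{E}(z)$ only for the documents actually ``touched'' by that mechanism, and any document is dropped from further participation once its remaining budget is insufficient for the next operation. Since the initial allocation is at most $\varepsilon$, the cumulative individual loss incurred by any single document is bounded by $\varepsilon$.

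Next I would isolate the atomic mechanisms and verify their individual privacy costs. For a query round $t$, the adaptive thresholding step performs, at each visited bin $i$, the update $\tilde{s} \leftarrow \tilde{s} + |A_t^{(i)}| + \mathrm{Lap}(1/\varepsilon_{\mathrm{thr}})$. By post-processing from the previously released partial sums $\tilde{s}_1,\dots,\tilde{s}_{i-1}$, this is informationally equivalent to publishing the single counting query $|A_t^{(i)}| + \mathrm{Lap}(1/\varepsilon_{\mathrm{thr}})$; since counts have $\ell_1$-sensitivity $1$ under add/remove, the Laplace mechanism makes this release $\varepsilon_{\mathrm{thr}}$-DP. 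Crucially, the individual loss at a document $z$ is zero whenever $z \notin A_t^{(i)}$ in both neighboring worlds, exactly matching the algorithm's rule of charging $\varepsilon_{\mathrm{thr}}$ to $\mathcal{E}(z)$ only for $z \in A_t^{(i)}$. The second atomic mechanism, the call $\dprag(x, D_{q_t}, \llm, \varepsilon_{\mathrm{RAG}})$, is $\varepsilon_{\mathrm{RAG}}$-DP by Lemma~\ref{lem:priv_base_dp_rag}, and the algorithm likewise debits $\varepsilon_{\mathrm{RAG}}$ only from the documents in $A_t'$, which are the only documents whose presence can alter the top-$k$ selection or the downstream voting.

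Summing across all $T$ rounds and all visited bins, the cumulative $(\infty,\cdot)$-RDP loss charged to any $z \in D$ equals the total amount debited by the filter, which by construction never exceeds the initial allocation $M \cdot \varepsilon_q \leq \varepsilon$. Applying the pure-DP specialization of the individual RDP filter theorem of \citet{feldman2021individual} then yields that the end-to-end composition is $\varepsilon$-DP with respect to any single-document change in $D$, giving the claimed add/remove guarantee.

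The hard part will be verifying that the filter argument correctly handles the algorithm's adaptivity: both the bin index at which Step~1 breaks and the set $A_t'$ used in Step~2 are data-dependent random variables, so the ``is $z$ touched?'' predicate is itself a function of the private data. The standard way out is to construct a coupling between executions on neighboring datasets $D$ and $D' = D \triangle \{z^*\}$ in which, conditional on the public transcript up to a given step, the two runs take the same branch whenever $z^*$ has not yet been touched. With such a coupling in hand, every decrement to $\mathcal{E}(z^*)$ corresponds to a bona fide release whose individual cost matches the charged amount, and the remainder of the argument reduces to per-document basic composition of pure-DP mechanisms.
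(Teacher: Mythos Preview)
Your proposal is correct and follows essentially the same approach as the paper: both invoke the individual R\'enyi filter framework of \citet{feldman2021individual} at order $\alpha=\infty$, verify that each bin-count release costs $\varepsilon_{\mathrm{thr}}$ only to documents landing in that bin and that the \dprag\ call costs $\varepsilon_{\mathrm{RAG}}$ only to documents in $A_t'$, and then conclude via their Corollary~3.3. The paper is terser on the adaptivity issue you flag as the ``hard part'': it handles it by a Bayes-rule decomposition of the prefix-sum transcript conditioned on the public history (noting that the break condition depends only on already-released noisy sums and the fixed constant $k$), so the coupling you sketch is effectively the content of the cited filter theorem and need not be rebuilt from scratch.
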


\begin{proof}
The proof follows the approach of \citet[Theorem~4.5]{feldman2021individual}. We first bound the individual privacy loss of the $t$-th prefix-sum release algorithm, denoted by $\mathcal{A}_t$. 
Consider $S, \tilde{S} \in \mathcal{S}(z_i, n)$, and without loss of generality assume $z_i \in S$. 
\emph{Conditioned on the trajectory $r^{(t-1)}$ from the previous $t-1$ rounds}, 
for any possible output sequence $b^{(q)} := (b_1, b_2, \ldots, b_q)$ with $q \leq B$, 
the only interesting regime is when there exists $j \in [q]$ such that $z_i$ contributes to $b_j$. 
Otherwise, we have
\[
\mathcal{A}_t(S \mid r^{(t-1)}) \stackrel{d}{=} \mathcal{A}_t(\tilde{S} \mid r^{(t-1)}).
\]
In the former case, we can perform the decomposition using Bayes’ rule:
\begin{equation*}
    \begin{aligned}
        \log \left( \frac{\mathbb{P}(\mathcal{A}_t (S)=b^{(q)})}{\mathbb{P}(\mathcal{A}_{t}(\Tilde{S})=b^{(q)})}  \right) &= \underbrace{\log\left(\frac{\mathbb{P}(\mathcal{A}_t(S)[j+1:q]=b^{(j+1:q)} \mid b^{(j)})}{\mathbb{P}(\mathcal{A}_t(\Tilde{S})[j+1:q]=b^{(j+1:q)} \mid b^{(j)})} \right)}_{(a)} \\
        &\quad\quad + \underbrace{\log\left(\frac{\mathbb{P}(\mathcal{A}_t(S)[j]=b_j \mid b^{(j-1)})}{\mathbb{P}(\mathcal{A}_t(\Tilde{S})[j]=b_j \mid b^{(j-1)})} \right)}_{(b)}+ \underbrace{\log\left( \frac{\mathbb{P}(\mathcal{A}_t(S)=b^{(j-1)})}{\mathbb{P}(\mathcal{A}_t(\tilde{S})=b^{(j-1)})} \right)}_{(c)}  \\
        &\leq \varepsilon_{\mathrm{thr}}
    \end{aligned}
\end{equation*}

Observe that the bins are disjoint, which implies that the privacy budget consumption is independent across different data points. Consequently, we have $(a)=(c)=0$ and $(b)\leq \varepsilon_{\mathrm{thr}}$.

Next, consider the RAG step. The non-trivial case arises when $z_i \in A_t'$. In this case, by the composition theorem, the privacy loss of $\dprag \circ \topk$ is bounded above by $\varepsilon_{\mathrm{RAG}}$. 

Moreover, $\mathcal{E}(z_i)$ constitutes a valid stopping time, as the privacy budget is updated after each invocation of the algorithms, and $z_i$ is only used when its budget remains sufficient. Therefore, by \citet[Corollary~3.3]{feldman2021individual}, the overall privacy guarantee is given by $\mathcal{E}(z)$, which is upper bounded by $\varepsilon$.
\end{proof}

\begin{remark}

Algorithm~\ref{alg:batch_dp_rag-v2} employs a fixed, data-independent threshold $k$ (Line~9), rather than a data-dependent choice such as a DP quantile. If, instead, we were to use a privately released data-dependent threshold, the resulting selection would become coupled to the data, thereby violating the assumptions underlying the individual-filter guarantee.
\end{remark}

\subsection*{Privacy Guarantee for Algorithm~\ref{alg:dp_fix_tau}}
\begin{thm*}[Restatement of Theorem~\ref{lem:privacy_batch_rag_fix}]
    $\dpfixtau$ satisfies \(\varepsilon\)-differential privacy if, for every $ z \in D $, the ex-ante individual privacy budget is at most \(\varepsilon\).
\end{thm*}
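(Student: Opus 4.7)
The plan is to mirror the proof of Theorem~\ref{lem:privacy_batch_rag_ada}, invoking the individual R\'enyi/privacy filter framework of \citet{feldman2021individual}, but in a substantially simplified form since \dpfixtau{} performs no data-dependent threshold release. The essential observation, which makes the filter framework legitimate, is that $\tau$ is a public constant chosen ex ante, so the indicator $\mathbf{1}[r(z_i,q_t)>\tau]$ is determined entirely by $z_i$'s own relevance score; the inclusion of $z_i$ in the filtered set $D_{q_t}$ is therefore independent of whether any other document $z_j$ is present in the corpus. Combined with the fact that $\mathcal{E}(z_i)$ depends only on $z_i$'s own history of retrievals, this makes the entire "decision to charge $z_i$" a function of $z_i$ alone.

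First, I would fix a target document $z_i$ and neighboring datasets $(S,\tilde S)\in\mathcal{S}(z_i,n)$ with, say, $z_i\in S$. For each round $t=1,\dots,T$, I would condition on the trajectory of prior rounds and decompose the per-round individual privacy loss into two pieces: (i) the filtering/active-set update, and (ii) the call to $\dprag\circ\topk$. For (i), since $\tau$ is data-independent and the budget bookkeeping for $z_i$ is local, the event $\{z_i\in D_{q_t}\}$ coincides deterministically across $S$ and $\tilde S$ in the relevant sense, so no privacy loss is charged at this step. For (ii), when $z_i\in D_{q_t}$, Lemma~\ref{lem:priv_base_dp_rag} implies that the composed mechanism $\dprag\circ\topk$ is $\varepsilon_q$-DP with respect to the presence or absence of $z_i$ in $D$, which is exactly the amount the algorithm deducts from $\mathcal{E}(z_i)$ in that round; when $z_i\notin D_{q_t}$, no deduction is needed and the outputs agree in distribution.

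Next, I would argue that $\mathcal{E}(z_i)$ is a valid stopping time for $z_i$: the budget is non-increasing, and the guard $\mathcal{E}(z)\ge\varepsilon_q$ in the active-set update ensures $z_i$ is excluded from all subsequent rounds once its remaining budget drops below $\varepsilon_q$. Applying \citet[Corollary~3.3]{feldman2021individual} to the sequence of per-round individual losses then yields that the total individual privacy loss of $z_i$ is bounded by its initial budget $M\cdot\varepsilon_q$. Under the hypothesis $M\cdot\varepsilon_q\le\varepsilon$, this is at most $\varepsilon$. Taking the supremum over all $z_i$ converts the individual-DP bound into the standard (worst-case) $\varepsilon$-DP statement under add/remove neighbors, completing the proof.

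The main obstacle, and the point that deserves the most careful justification, is verifying that the filtering and top-$k$ selection preserve the individual-filter semantics: namely, that restricting attention to the pair $(S,\tilde S)$ differing only in $z_i$, the "view" of the algorithm on other documents evolves identically in distribution. This is precisely where the public, data-independent nature of $\tau$ is essential; if $\tau$ were replaced by a privately released, data-dependent threshold, the inclusion of $z_j\ne z_i$ in $D_{q_t}$ could depend on $z_i$ through the released threshold, breaking the filter and requiring the more delicate Bayes-rule decomposition used for \dpadaptovetau{}. Beyond this structural observation, the remaining steps are routine applications of basic composition inside the individual-filter framework and Lemma~\ref{lem:priv_base_dp_rag}.
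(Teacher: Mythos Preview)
Your proposal is correct and takes essentially the same approach as the paper: the paper's proof is a one-line reduction (``by an analysis analogous to the proof of Theorem~\ref{lem:privacy_batch_rag_ada}, the claimed privacy guarantee follows directly from \citet[Corollary~3.3]{feldman2021individual}''), and you have spelled out precisely that analogy, correctly isolating the key structural point that the public, data-independent threshold $\tau$ makes the filtering step local to each document so that the individual-filter framework applies cleanly.
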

\begin{proof}
Since $\mathcal{E}(z) \leq \varepsilon$ for every $z \in D$, by an analysis analogous to the proof of Theorem~\ref{lem:privacy_batch_rag_ada}, the claimed privacy guarantee follows directly from \citet[Corollary~3.3]{feldman2021individual}.
\end{proof}

\section{Experimental Details}
\label{app:exp}
\textbf{Implementation details of our methods and baseline methods.} All four DP algorithms rely on shared hyperparameters from DPSparseVoteRAG, including the number of retrieved documents $k$, the per-token privacy budget $\varepsilon_{\text{token}}$, and the SVT threshold $\tau_{\text{svt}}$. Following \citet{koga2024privacy}, we evaluate each method under a grid of settings with $k \in \{30, 40, 50\}$, $\varepsilon_{\text{token}} \in \{0.5, 1.0, 2.0\}$, and $\tau_{\text{svt}} = k/2$. 
For MURAG-ADA, the bins for discretizatizing the similarity scores are the bins between $70$ and $100$ with the bin size $0.2$.
For the Non-Private-RAG, we retrieve \{1, 3, 5, 10\} documents in the context for each question.
We report the best performance for each method over these configurations.
For PE, we adopt the same hyperparameter configuration used for unconditional generation on the PubMed dataset (Table 14 in \citet{xie2024differentially}) and generate $2000$ synthetic samples.

\paragraph{Implementation details of the membership inference attack.} We implement the Interrogation Attack (IA; \citet{naseh2025riddle}) to assess the privacy protection offered by our methods. For each test document $x$, the attack generates $m=30$ tailored questions. Following the setup in \citet{naseh2025riddle}, we reuse their prompt template (Figure 14 in their paper) and employ GPT-4 to generate the questions for documents in the ChatDoctor dataset.

The following is an example of one target document and three generated questions.
\begin{tcolorbox}[
    colback=gray!5,
    colframe=blue!75!black,
    fonttitle=\bfseries
]
Document: Patient's description: I have noticed that my husband (black, male 47 yrs old) has increasingly began involuntary head movements (as if his right ear is trying to touch his right shoulder). They are sporadic in nature and he is aware that they happen but is not able to control them.  Can you please help explain what may be causing this?  I have notice them happening more often than not. \#\#\# Doctor's answer: Hi, some people get this increase of such movements in time of stress, so make him relaxed, you can observe such movements when he is tensed due to some reasons, meditation could control that. Wish him good health.\\
\\
Generated Questions and Corresponding Answers:\\
Q: Did the patient experience sharp abdominal pain on the right side? A: Yes \\
Q: Is there increased pain on the lower right side during and after sex? A: Yes \\
Q: Did the patient report pain only during sex, not afterwards? A: No \\
\end{tcolorbox}
The construction of query, a combination of generated question and the necessary context, is as follow
\begin{tcolorbox}[
    colback=gray!5,
    colframe=blue!75!black,
    fonttitle=\bfseries
]
I am inquiring about a patient-doctor conversation where the doctor talked "\{doctor's answer in the target document\}" \{generated question\} Please answer with 'Yes,' or 'No'.
\end{tcolorbox}

\paragraph{QA templates.} For Trivia, NQ, and MQuAKE QA datasets, we adopt the following template for Non-RAG QA
\begin{tcolorbox}[
    colback=gray!5,
    colframe=blue!75!black,
    fonttitle=\bfseries
]
Instruction: Give a simple short answer for the question\\
Question: \{question\}\\
Answer:
\end{tcolorbox}
and the following template for RAG QA
\begin{tcolorbox}[
    colback=gray!5,
    colframe=blue!75!black,
    fonttitle=\bfseries
]
Instruction: Give a simple short answer for the question based on the context\\
Context: \{document $1$; $\cdots$; document $m$\}.
Question: \{question\}\\
Answer:
\end{tcolorbox}
For ChatDoctor dataset,  we adopt the following template for Non-RAG QA
\begin{tcolorbox}[
    colback=gray!5,
    colframe=blue!75!black,
    fonttitle=\bfseries
]
Instruction: if you are a doctor, please answer the medical questions based on the patient's description\\
Question: \{question\}\\
Answer:
\end{tcolorbox}
and the following template for RAG QA
\begin{tcolorbox}[
    colback=gray!5,
    colframe=blue!75!black,
    fonttitle=\bfseries
]
Instruction: if you are a doctor, please answer the medical questions based on the patient's description and the given example\\
Example: \{document $1$; $\cdots$; document $m$\}.
Question: \{question\}\\
Answer:
\end{tcolorbox}

\paragraph{Implementation details of the private evolution (PE, ~\citep{xie2024differentially}).} 
Since the external datasets used in our RAG setup are quite large, applying a synthetic text generation method directly on these private datasets can be computationally inefficient. To alleviate this overhead—and to give the baseline a favorable setup—we adopt an approximation: for each QA dataset, we select the top-50 document for each question and attain a joint document set. Then we run PE on this smaller but question-focused subset of the private dataset.

In our experiment, we are using the following prompts for the random API and variation API as follows:

\paragraph{Random API} For the ChatDoctor dataset, we adopt the following template:
\begin{tcolorbox}[
    colback=gray!5,
    colframe=blue!75!black,
    fonttitle=\bfseries
]
Instruction: \{example\} Using a variety of sentence structures, write a dialogue between a patient describing their condition and a doctor giving suggestions\\
Answer:
\end{tcolorbox}
and the following template for Trivia, NQ, and MQuAKE QA datasets
\begin{tcolorbox}[
    colback=gray!5,
    colframe=blue!75!black,
    fonttitle=\bfseries
]
Instruction: Using a variety of sentence structures, for answering the question \{question\}, write a Wikipedia paragraph\\
Answer:
\end{tcolorbox}

In the ChatDoctor random API template, the placeholder ${\text{example}}$ is filled with a sample dialogue in which a patient describes their condition and a doctor provides suggestions. In contrast, the random API templates for Trivia, NQ, and MQuAKE use the placeholder ${\text{question}}$, sampled from the corresponding question set in a round-robin manner. As the number of API calls exceeds the set size, the sampling ensures every question is used at least once, guaranteeing full coverage in the PE generation.

\paragraph{Variation API} For the ChatDoctor dataset, we adopt the following template:
\begin{tcolorbox}[
    colback=gray!5,
    colframe=blue!75!black,
    fonttitle=\bfseries
]
Instruction: Please rephrase the following {tone}sentences as a dialogue between a patient describing their condition and a doctor giving suggestions\\
Answer:
\end{tcolorbox}
and the following template for Trivia, NQ, and MQuAKE QA datasets
\begin{tcolorbox}[
    colback=gray!5,
    colframe=blue!75!black,
    fonttitle=\bfseries
]
Instruction: Please rephrase the following sentences as a Wikipedia paragraph\\
Answer:
\end{tcolorbox}

\end{document}